\documentclass[12pt]{article}
\usepackage{fullpage}
\usepackage[utf8]{inputenc}
\usepackage{hyperref}
\usepackage{amsthm,bm,amsfonts,amsmath,url}
\usepackage{CJKutf8}
\usepackage{caption}

\usepackage[backend=biber,style=nature,url=false]{biblatex}
\addbibresource{TSI_references_main.bib}
\addbibresource{TSI_references_supp.bib}
\DeclareBibliographyCategory{main}
\DeclareBibliographyCategory{supplement}

\newcommand{\citemain}[1]{\mbox{\cite{#1}}\addtocategory{main}{#1}}
\newcommand{\citepmain}[1]{\mbox{\parencite{#1}}\addtocategory{main}{#1}}
\newcommand{\citesupp}[1]{\mbox{\cite{#1}}\addtocategory{supplement}{#1}}
\newcommand{\citepsupp}[1]{\mbox{\parencite{#1}}\addtocategory{supplement}{#1}}
\newcommand{\citecsupp}[2]{\mbox{\cite[#1]{#2}}\addtocategory{supplement}{#2}}

\newtheorem{thm}{Theorem}
\newtheorem{Assumption}{Assumption}
\newtheorem{Lemma}{Lemma}

\newtheorem{Definition}{Definition}

\usepackage[dvipsnames]{xcolor}
\usepackage{graphicx}
\usepackage{floatrow}
\usepackage{wrapfig}
\usepackage[normalem]{ulem} 
\usepackage{booktabs}
\usepackage{makecell}
\usepackage{array}
\usepackage{environ}
\newcommand{\centered}[2]{\parbox{#1\linewidth}{\centering #2}}

\usepackage{csquotes}
\renewcommand{\mkbegdispquote}[2]{\itshape}
\usepackage{enumerate}
\makeatletter
\setlength{\@fptop}{0pt}
\makeatother

\usepackage[most]{tcolorbox}

\newtcolorbox{pvaluefunctionbox}{
  title={Method for computing $p$-value functions},
  fonttitle=\bfseries,
  colback=white,
  colframe=gray,
  boxrule=0.8pt,
  arc=2pt,
  left=6pt,right=6pt,top=6pt,bottom=6pt,
  breakable
}

\usepackage{algorithm}
\usepackage{algpseudocode}

\newcommand{\bmu}{\boldsymbol{\bmu}}
\let\hat\widehat

\definecolor{vermilion}{rgb}{0.89, 0.26, 0.2}

\newcommand{\codecomment}[1]{\textbf{\color{black}// #1}}

\def\I{{\mathbb I}}
\def\P{{\mathbb P}}
\def\pr{\mathbb{P}}

\def\X{{\mathbf{X}}}
\def\x{{\mathbf{x}}}
\def\z{{\mathbf{z}}}

\def\z{{\mathbf{z}}} 
\def\E{{\mathbb E}}

\def\T{{\mathcal T}}

\def\B{{\mathbf{B}}}
\def\C{{\mathbf{C}}}

\let\hat\widehat

\NewEnviron{hidden}{%
  \setbox0\vbox{%
   \let\xxwrite\write
   \protected\def\write{\immediate\xxwrite}%
   {\tiny XX\BODY XX}}
  }

\renewcommand{\thefootnote}{\arabic{footnote}}

\title{Trustworthy scientific inference with generative models}
\author{James Carzon$^{1,\dagger}$ \and
        Luca Masserano$^{1,\dagger}$ \and
        Joshua D. Ingram$^{1,\dagger}$ \and
        Alex Shen$^{1,\dagger}$ \and
        Antonio Carlos Herling Ribeiro Junior$^{1}$ \and
        Tommaso Dorigo$^{2,3,4}$ \and
        Michele Doro$^{5,3}$ \and
        Joshua S. Speagle \begin{CJK*}{UTF8}{gbsn}(沈佳士)\end{CJK*}$^{6,7,8,9}$ \and
        Rafael Izbicki$^{10}$ \and
        Ann B. Lee$^{1,*}$}

\begin{document}

{\fontsize{6}{8}\selectfont
\footnotetext[1]{Department of Statistics and Data Science, Carnegie Mellon University, USA}
\footnotetext[2]{Lule\r{a} Tekniska Universitet, Sweden}
\footnotetext[3]{Istituto Nazionale di Fisica Nucleare (INFN), Sezione di Padova, Italy}
\footnotetext[4]{Universal Scientific Education and Research Network (USERN), Italy}
\footnotetext[5]{Department of Physics and Astronomy, Università di Padova, Italy}
\footnotetext[6]{Department of Statistical Sciences, University of Toronto, Canada}
\footnotetext[7]{David A. Dunlap Department of Astronomy \& Astrophysics, University of Toronto, Canada}
\footnotetext[8]{Dunlap Institute for Astronomy \& Astrophysics, University of Toronto, Canada}
\footnotetext[9]{Data Sciences Institute, University of Toronto, Canada}
\footnotetext[10]{Department of Statistics, Universidade Federal de São Carlos (UFSCar), Brazil}

\renewcommand{\thefootnote}{\fnsymbol{footnote}}
\def\thefootnote{$\dagger$}\footnotetext{These authors contributed equally to this work}
\def\thefootnote{*}\footnotetext{{\em Corresponding author:} Ann B. Lee,  annlee@andrew.cmu.edu}
}

\date{}
\maketitle

\begin{abstract}
Generative artificial intelligence (AI) excels at producing complex data structures (text, images, videos) by learning patterns from training examples. Across scientific disciplines, researchers are now applying generative models to ``inverse problems'' to directly predict hidden parameters from observed data along with measures of uncertainty. While these predictive or posterior-based methods can handle intractable likelihoods and large-scale studies, they can also produce biased or overconfident conclusions even without model misspecifications. We present a solution with Frequentist-Bayes (FreB), a mathematically rigorous protocol that reshapes AI-generated posterior probability distributions into (locally valid) confidence regions that consistently include true parameters with the expected probability, while  achieving minimum size when training and target data align. We demonstrate FreB's effectiveness by tackling diverse case studies in the physical sciences: identifying unknown sources under dataset shift, reconciling competing theoretical models, and mitigating selection bias and systematics in observational studies. By providing validity guarantees with interpretable diagnostics, FreB enables trustworthy scientific inference across fields where direct likelihood evaluation remains impossible or prohibitively expensive.
\end{abstract}

\newpage

\section{Introduction} 

How can scientists reliably infer internal properties of complex systems? This challenge---extracting underlying insights from the data we can collect---stands at the frontier of modern science across disciplines. Unfortunately, traditionally relied-upon statistical approaches \citepmain{casella2024statistical,vaart_asymptotic_1998} can break down precisely when dealing with the sophisticated (often computationally intractable) physics-based models needed to explore the most pressing questions~\citepmain{algeri_searching_2020,cranmer_frontier_2020}.
At the same time,  computational schemes like Markov Chain Monte Carlo (MCMC; \citepmain{brooks_handbook_2011}), nested sampling \citepmain{skilling_nested_2006} and GP emulators  \citepmain{gramacy_surrogates_2020} are not fast or rich enough to handle high-resolution large-scale data presented by, e.g., near-future astronomical surveys with billions of galaxies \citepmain{wang_sbi_2023} or complex spatio-temporal probability distributions \citepmain{saad_scalable_2024}.
\\

\noindent In response, many researchers are turning to a powerful alternative with generative artificial intelligence (AI) and neural inference.
Generative models and neural density estimators such as normalizing flows, diffusion models, and flow matching are used to generate plausible parameters for observed data by training on labeled examples. (Here we use the terms  \textit{label} and \textit{parameter} interchangeably to indicate internal properties of an object, e.g., the age of a galaxy or the mass of a subatomic particle.) By learning underlying patterns and structures of the train data, the result is a ``probability map'' connecting observations to plausible parameters---this map is known as a neural posterior distribution \citepmain{papamakarios_fast_2016,lueckmann_flexible_2017,radev_bayesflow_2022}. Such machine learning approaches bypass the need for computationally tractable mathematical formulas (likelihoods) while delivering results orders of magnitude faster than conventional approaches. Notable examples include applications with James Webb Space Telescope data \citepmain{wang_sbi_2023} and ocean remote sensing measurements \citepmain{sainsbury-dale_likelihood-free_2024,sainsbury-dale_neural_2025}. 
\\
\\
\noindent 
However, as we shall see, generative models (as well as other predictive and posterior-based approaches) can lead to misleading inferences if applied naively to parameter reconstruction (see also \citepmain{hermans_crisis_2022}). That is, despite recent promising advances, a fundamental question remains:

\begin{displayquote}
  Generative AI excels at \textbf{producing} complex data (text, images, videos), but how can scientists make sure that generative AI is equally successful at \textbf{recovering} hidden parameters from observed data with valid measures of uncertainty?
\end{displayquote}

\noindent The question, of reliably obtaining so-called valid confidence regions for parameters of interest, has profound implications across the sciences. In high-energy physics, CERN's Large Hadron Collider experiments analyze complex proton collision outcomes to measure Standard Model parameters \citepmain{glashow_renormalizability_1959,salam_weak_1959,weinberg_model_1967} and explore theoretical extensions like supersymmetry \citepmain{noauthor_quest_2025}. In astronomy, space telescopes like \emph{Gaia} determine stellar properties from spectral measurements \citepmain{gaia_collaboration_gaia_2023}. Similar inference challenges emerge in  geophysics \citepmain{stockman_sb-etas_2024}, epidemiology \citepmain{radev_outbreakflow_2021}, neuroscience \citepmain{goncalves_training_2020},  material science \citepmain{zhdanov_amortized_2022}, and molecular dynamics \citepmain{doi:10.1073/pnas.2420158122} to name a few.\\
\noindent 

\begin{figure}[t!]
    \centering
    \includegraphics[width=0.7\linewidth,trim={2cm 2cm 2cm 1cm},clip]{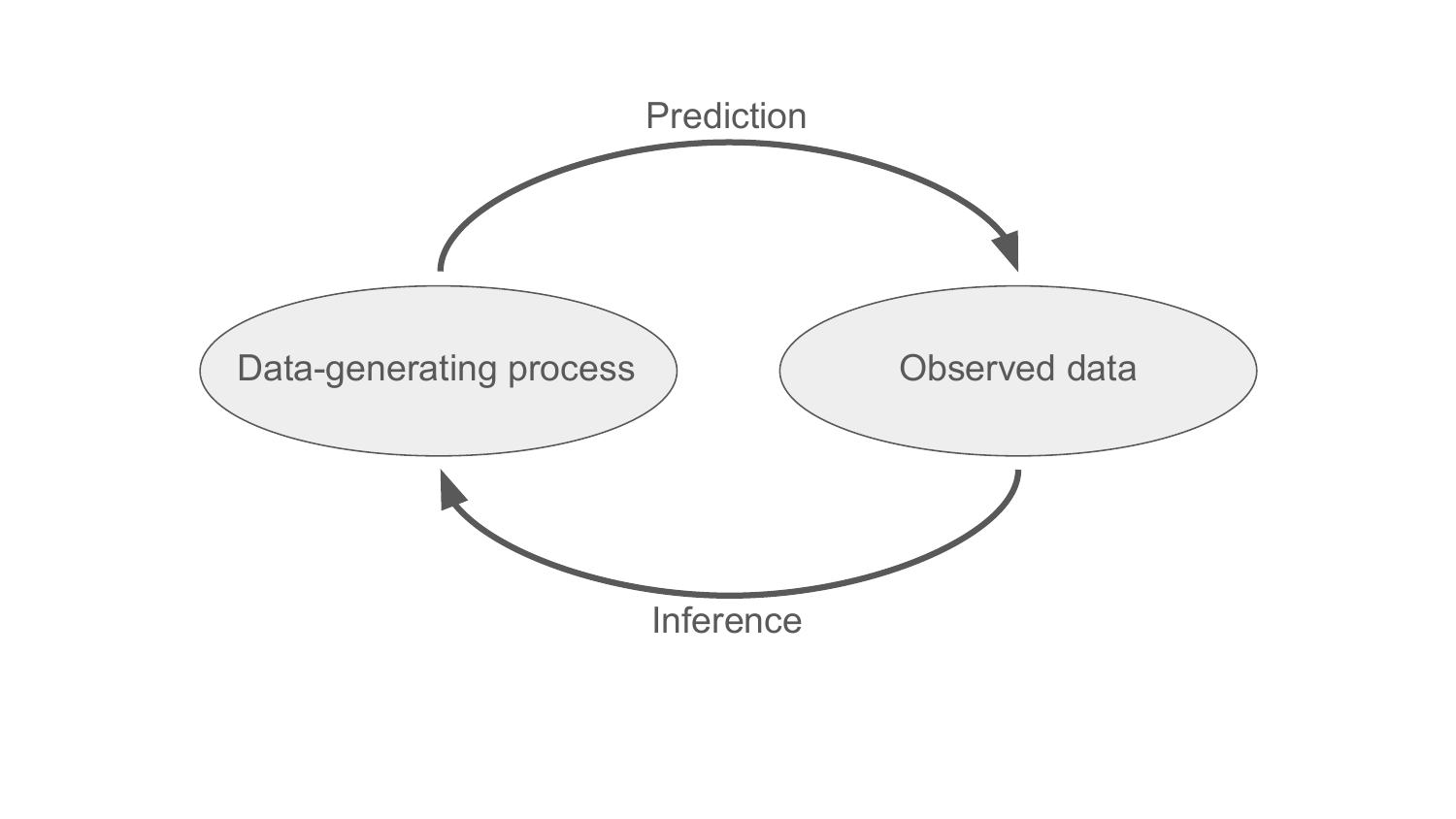}
    \caption{ {\bf Prediction versus inference.} Given a data-generating process---in the form of a scientific theory, statistical/AI model or Nature itself---scientists can make predictions on observable data (this is the so-called forward problem). In complex settings, the ``inverse'' problem of making trustworthy {\em inferences} on the underlying physical process (thereby allowing scientists to test and constrain scientific theories and the parameters defining them) can be challenging with limited data, even with a well-constructed forward model.}
    \label{fig:inference_and_prediction}
\end{figure}

\noindent The challenge of generative AI for inference lies in the fundamental difference between the {\em forward} problem of prediction (generating observations from known parameters $\theta$) and the {\em inverse} problem of inference (reconstructing parameters from observations $X$); see Figure~\ref{fig:inference_and_prediction}. For scientific discovery, parameter constraints must be statistically valid---scientists need confidence regions that contain the true parameter value with a specified probability or confidence level no matter what its true value is (local coverage\footnote{That is, we want a confidence set $C(X)$ such that $\P_{X|\theta}(\theta \in C(X)) \geq 1 - \alpha$,  for \emph{every} value of the unknown parameter $\theta$ at a prechosen miscoverage level $\alpha \in (0.1)$.}), while being sufficiently small to advance scientific understanding (high constraining power). Even with perfectly estimated probabilities under ideal conditions (such as an all-knowing simulator, or train data with no errors in labels), predictive or posterior-based approaches to learning parameters from data fail in two critical areas:
\begin{enumerate}
    \item \textbf{Lack of validity for individual instances.} Current methods may achieve correct coverage (or confidence level) of the true parameter when {\em averaged} across many objects/subjects, each with different parameter values. However, they provide no guarantees for each individual instance or state (parameter setting) and often lack the means to check local coverage for every possible parameter setting to pinpoint areas of potential failures. When scientists study individual stars, a specific particle collision event, or the current state of the atmosphere, this limitation leads directly to misleading conclusions as corresponding estimates may be overconfident, with deceivingly small regions of plausible parameters that have little chance of containing the true value. 
    \item \textbf{Biased results when train and target data do not match.} In practice, the examples used to train machine learning models rarely have the same properties as the objects of interest. Selection effects and observational limitations (when using real data) and competing theoretical models (when using simulated/synthetic data) all create mismatches between training examples and real-world targets. This problem fundamentally undermines reliability, especially because scientists do not know (and should not assume they can reliably guess) the true parameters of the targets in advance. The consequence is parameter estimates that are often unintentionally biased toward the values used to generate the train data---even when the truth is very different.
\end{enumerate}

\noindent To overcome these limitations, we introduce {\em Frequentist-Bayes} (FreB, pronounced as ``freebie'') {\em confidence procedures}---a mathematically rigorous and scalable protocol that reshapes probability distributions, such as those returned by neural density estimators and generative models, into statistically trustworthy parameter constraints. FreB uses a set of labeled examples to learn a transformation of posterior probability distributions to p-value functions via machine learning methods (as illustrated by the left column of Figure~\ref{fig:1D_example} in Section~\ref{sec:methods}). Level sets of the p-value functions then become confidence regions, maintaining proper local coverage by containing the true parameter with the stated probability. As long as some calibration data are available that come from the same physical process (likelihood) as the targets, FreB can account for misspecified models as well as differences in train and target data. Importantly, once calibrated, these procedures require no additional training when deployed on new data (they are ``amortized''), enabling efficient analysis of massive data sets.
\\
\\
The FreB framework offers three key advantages for scientific discovery:
\begin{enumerate}
\item \textbf{Works with small samples.} It provides reliable results even with just one observation per object---a common constraint in many scientific fields. There is also no need to simulate a batch of Monte Carlo samples per object, which is a computational bottleneck with traditional inference methods \citepmain{cousins_lectures_2024}.
\item \textbf{Guarantees for individual instances.} It ensures (and provides local diagnostics to verify) that stated confidence levels actually hold for each specific instance (that is, no matter the specific value of $\theta$ characterizing the property of, e.g., a star,  subatomic particle, human subject) being studied, not just on average across a population.
\item  \textbf{Precise when the prior and the forward model are accurate.} It produces tight, informative parameter constraints when scientists’ background knowledge (expressed as what is known as a prior distribution $\pi(\theta)$) and forward model (expressed by an approximate likelihood or simulator $\widehat{p}(X|\theta)$) align with the target data. We also arrive at tight parameter constraints for the target population with a high-fidelity model when the parameter distribution of the train data (here also just referred to as a ``prior'') is aligned with the true parameters of the targets.\footnote{\label{fn:pstr_prior} The prior and posterior distributions (mathematically denoted by $\pi(\theta)$ and $\pi(\theta|X)$, respectively) are typically interpreted as the uncertainty in our knowledge of $\theta$ {\em a priori} or {\em a posteriori} (before, and after the fact) of observing data $X$. In this work, we will use the terms ``priors'' and ``posteriors'' beyond the traditional subjective Bayesian view \citepmain{gelman_bayesian_2013} to also apply to probabilities that can be indirectly determined by the observed population of physical entities, such as the stars in our galaxy or different states of our climate system.}
\end{enumerate}

\noindent
In Section~\ref{sec:methods}, we give an overview of the FreB experimental set-up and protocol. In Section~\ref{sec:results}, we demonstrate FreB's effectiveness through a two-dimensional (2D) synthetic example and three diverse case studies in physics and astronomy, each case study addressing a specific statistical challenge (see Table~\ref{tab:case_studies}):

\begin{table}[p]
    \vspace*{-0.5in}
    \textbf{Scientific inference challenges addressed in our work}\vspace{-8mm}
    \caption{\small \textbf{Scientific inference challenges addressed in our work.} Each case study in Sections~\ref{sec:gamma_showers},~\ref{sec:Brutus_study} and~\ref{sec:Gaia_study} (with the set-up listed in the right column) illustrates a unique scientific challenge, which we resolve with our proposed approach. {\em Right Column,} {\bf I:} Ground-based detector array for measuring atmospheric cosmic-ray showers (Image credit: Richard White, MPIK). {\bf II:} Two differing models of the galaxy, simulated using \texttt{Brutus} \citepmain{speagle_deriving_2025} {\bf III:} Galactic map and noisy stellar labels (parameter estimates) included in a cross-match (\textcolor{orange}{orange}) between Gaia Data Release 3 \citepmain{gaia_collaboration_gaia_2023} and APOGEE Data Release 17 \citepmain{majewski_apache_2017}; a subsample with more precise labels are highlighted (\textcolor{blue}{blue}).}

    \begin{center}
        \resizebox{1\columnwidth}{!}{
            \begin{tabular}{
                >{\centering\arraybackslash}p{0.04\linewidth}|>{\centering\arraybackslash}p{0.42\linewidth}|>{\centering\arraybackslash}p{0.54\linewidth}}

                \toprule
                \textbf{\#}
                &
                \centered{1}{\textbf{Inference challenge}}
                & 
                {\textbf{Case study}} \\
            
                \midrule I 
                & 
                \textsc{
                    \centered{0.97}{Identify previously unknown \\ physical sources}
                }
                & 
                \makecell{
                    \includegraphics[width=5.5cm]{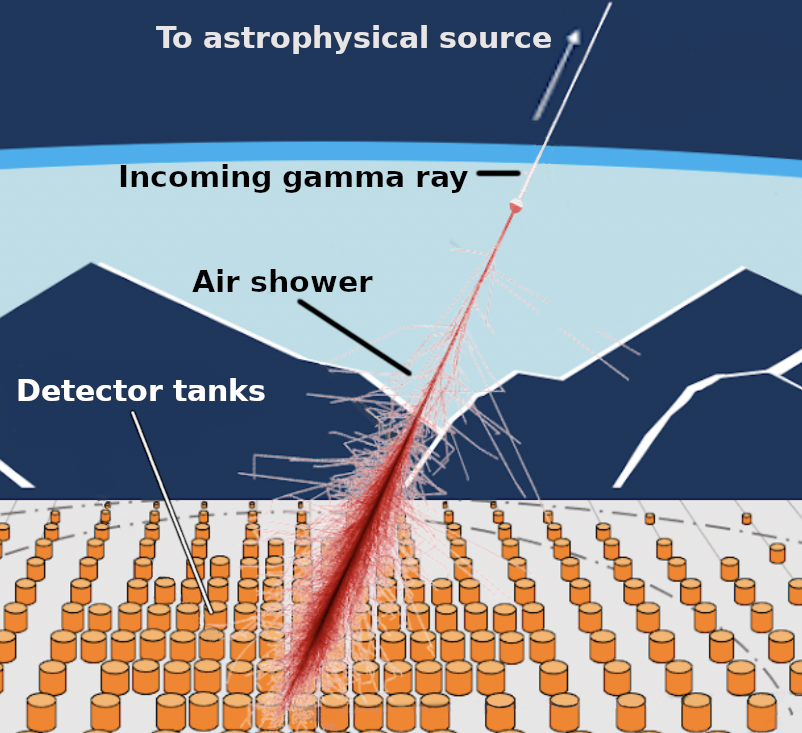}\\
                    \hline \small
                        Reconstructing gamma rays to localize \\ and measure astrophysical sources
                } \\
            
                \midrule II 
                & 
                \textsc{
                    \centered{0.97}{Resolve conflicting results from differing models of nature}
                }
                & 
                \makecell{
                    \includegraphics[width=5.5cm]{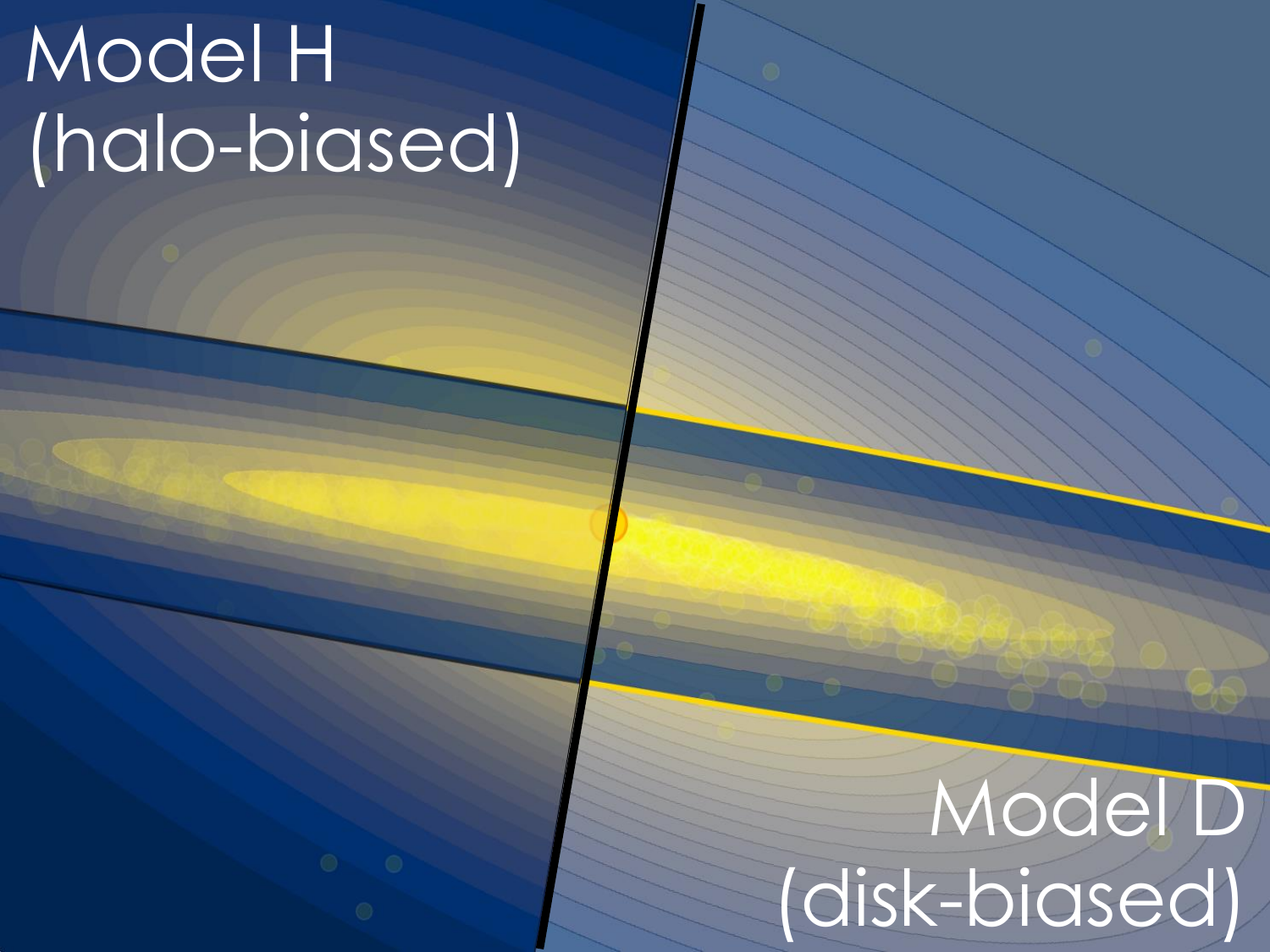}\\
                    \hline \small
                        Inferring properties of Milky Way stars \\ using two different galactic models
                } \\
            
                \midrule III
                &
                \textsc{
                    \centered{0.97}{Ensure trustworthy inferences \\ under selection bias and systematics}
                }
                &
                \makecell{
                    \includegraphics[width=5.5cm]{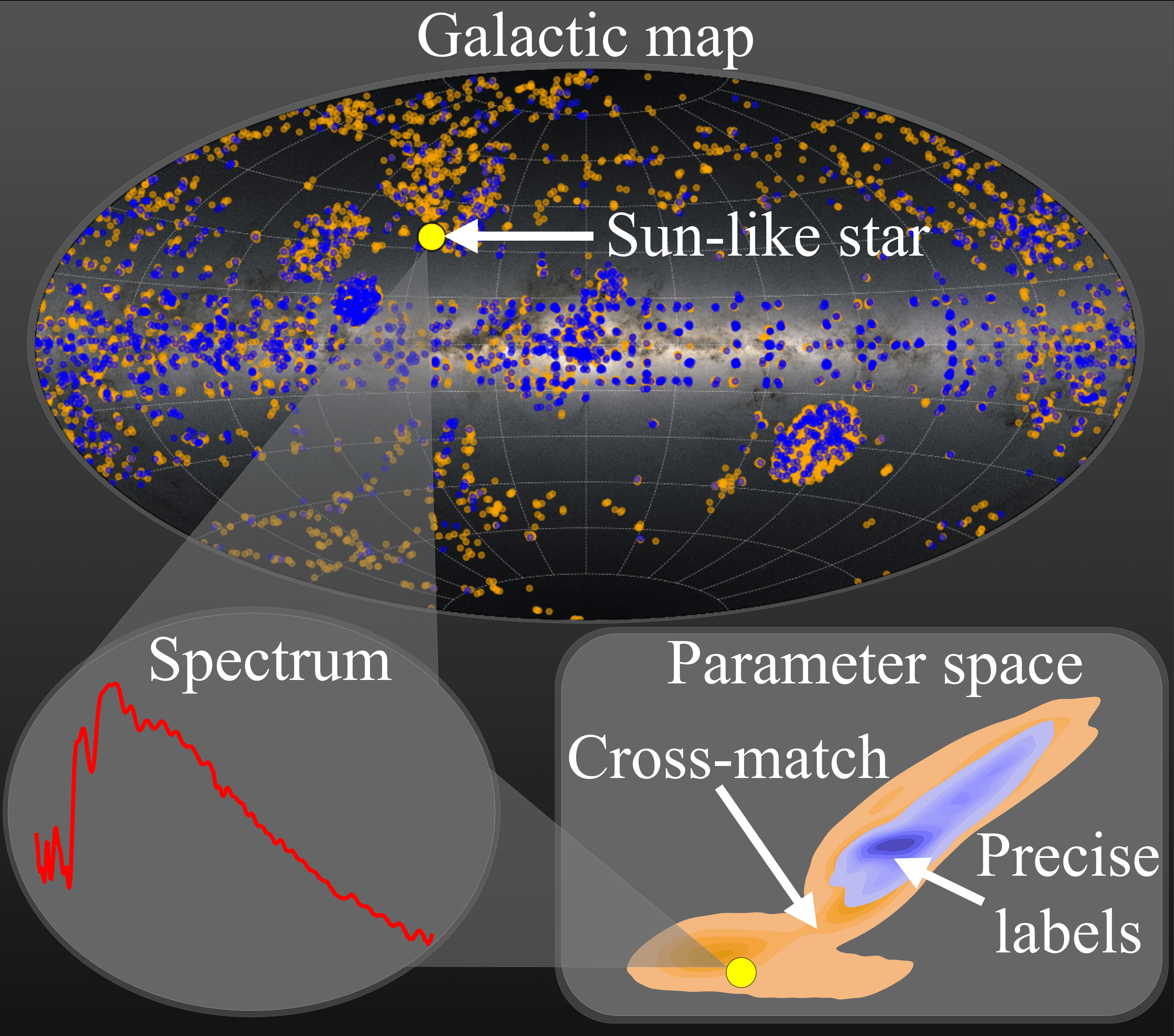} \\
                    \hline \small
                        Estimating stellar parameters with cross-matched \\ astronomical catalogs under selection bias
                } \\
                \bottomrule
            \end{tabular}
        }
    \end{center}
    \label{tab:case_studies}
\end{table}

\begin{itemize}
    \item Case study I reconstructs gamma rays to localize and measure astrophysical sources. 
    \item Case study II infers properties of Milky Way stars using two different galactic models.
    \item Case study III estimates stellar parameters with cross-matched astronomical catalogs under selection bias.
\end{itemize}
\noindent By connecting generative AI, classical statistics, and modern machine learning, our approach enables scientists to perform trustworthy inference using neural posteriors and generative models in inverse problems, even when train data differ from targets. While the examples in this paper are focused in the physical sciences, our framework can equally advance mathematically principled scientific discovery in biology, environmental science, medical research, industrial processes, and other fields where traditional methods fail.

\section{Methodology} \label{sec:methods}

\noindent This paper proposes a new framework for reliable scientific inference under intractable likelihoods, which bridges classical (frequentist) statistics \citepsupp{neyman_outline_1937,neyman_problem_1935} with Bayesian inference and machine learning. In this section, we describe the experimental set-up and give an overview of the FreB protocol. We refer the reader to {Appendix~\ref{sec:theory_and_algos} for theoretical details, proofs, and algorithms.

\subsection{Experimental set-up}\label{sec:experimental_setup}

\noindent  Suppose we have {\em unlabeled} target data $$ \mathcal{T}_{\rm target} = \left\{ (\theta_1^{*}, X_1^{\rm target}) \, \ldots  (\theta_{N}^{*}, X_{N}^{\rm target})\right\} \sim p_{\rm target}(\theta)  p(X | \theta), $$ where neither the true parameters $\theta_1^{*}, \ldots, \theta_{N}^{*}$ nor the distribution $p_{\rm target}(\theta)$ are known to the scientist.\footnote{From a classical statistics perspective, these parameters are perhaps best understood as ``latent variables.'' Although each parameter $\theta^*_i$ is {\em fixed} and not random for each object $i$, we define a marginal distribution for $\theta$ that represents its prevalence in the target population. In addition, in some applications we only observe each target object once (that is, the sample size $n=1$ for each parameter).} With generative models, the scientist learns an estimate of the posterior distribution $\widehat{\pi}(\theta|X)$, which represents the plausibility of parameters given data $X$. The modern approach for large-scale complex systems is to pretrain such models on broad data from different sources, or train models on synthetic examples from a physics-based simulator and chosen prior. More specifically, the posterior is learned using labeled train data $$ \mathcal{T}_{\rm train} = \{ (\theta_1, X_1) \, \ldots  (\theta_B, X_B)\} \sim \pi(\theta) \widehat{p}(X | \theta),$$ where both the prior distribution, denoted $\pi(\theta)$, and the assumed likelihood $\widehat{p}(X | \theta)$ can be different from $p_{\rm target}(\theta)$ and  $p(X | \theta)$, respectively, due to prior mismatch (Section~\ref{sec:Brutus_study}), selection effects (Section~\ref{sec:Gaia_study}), and model misspecifications with respect to $p(X|\theta)$ (so-called systematics; see Appendix~\ref{sec:well_specified_2D}).
\\
\\
\noindent FreB offers a practical means of adjusting and checking pre-trained posterior models against calibration data 
$$ \mathcal{T}_{\rm cal} = \{ (\theta_1', X_1') \, \ldots  (\theta_{B'}', X_{B'}')\} \sim r(\theta)  p(X | \theta),$$
where the reference distribution $r(\theta)$ covers the parameter space $\Theta$ of interest. The assumption is that the calibration data stem from the {\em same} physical process and likelihood $p(X|\theta)$ as the target data---but the distribution $r(\theta)$ does {\em not} need to be the same as $p_{\rm target}(\theta)$.
\\
\\
\noindent Our goal is to construct a confidence region $C(X)$ for $\theta$ that has correct frequentist coverage; that is, $\P_{X|\theta}(\theta \in C(X)) \geq 1 - \alpha$ for every unknown parameter $\theta$. Since the conditional distribution $X \mid \theta$ is assumed to be the same for calibration and target data, we have the result that if $C(X)$ ensures valid coverage for the calibration set, then it will also do so for our targets of interest. Note that posterior-based credible regions are generally not valid. For example, HPD level sets $H_c(X)=\left\{\theta:  \widehat{\pi}(\theta|X)> c \right\}$  do not usually have frequentist coverage properties in general settings, even when  $\widehat{\pi}(\theta|X) = \pi(\theta|X)$ and $\int_{H_c(X)} \pi(\theta|X) d\theta \geq 1-\alpha$.
\\
\\
\noindent In the next section, we present our protocol for ``reshaping'' a posterior (one form of  distribution) to a valid confidence procedure (another distribution) in general settings with, for example, intractable likelihoods, small sample sizes and misspecified models.

\subsection{A protocol for trustworthy scientific inference: from posteriors to locally valid confidence procedures}\label{sec:protocol}

\noindent Our proposed Frequentist-Bayes procedure mirrors the style of HPD level sets $H_c(X)=\left\{\theta:  \widehat{\pi}(\theta|X)> c \right\}$ in Bayesian inference, while providing frequentist coverage properties for every $\theta \in \Theta$, regardless of $\pi(\theta)$,  $\widehat{\pi}(\theta|X)$,  and the number of events per parameter. The main steps, summarized by the flow chart in Figure~\ref{fig:freB_branches} and illustrated by the 1D synthetic example in Figure~\ref{fig:1D_example}, are as follows:

\begin{enumerate}
    \item \textbf{Learn the posterior distribution}: From train data $\mathcal{T}_{\rm train}$, learn the posterior distribution ${\pi}(\theta|X)$ with, for example, a neural density estimator. The estimated posterior $\hat{\pi}(\theta|X)$, or a related function, is treated as a frequentist test statistic $\lambda(X; \theta)$. This statistic assigns a score $\lambda(X; \theta_0)$ that measures the degree to which a parameter value $\theta_0$ is plausible given that  $X$ is observed. Examples of other posterior-based scores include the Bayes Frequentist Factor (BFF; \citemain{dalmasso_likelihood-free_2024}) and the Waldo test statistics \citepmain{masserano_simulator-based_2023}. Note that in modern AI applications, the initial posterior $\widehat \pi(\theta|X)$ has already been pre-computed on abundant train or simulated data. The key FreB procedure is then to adjust these outputs as outlined below.
    \item \textbf{Reshape the posterior into p-value functions:} From calibration data $\mathcal{T}_{\rm cal}$, learn a family of monotonic transformations  $F(\cdot; \theta)$ of the test statistic $\lambda$ as follows (see Algorithm~\ref{algo:rejection_prob0} and Equation~\ref{eq:p-value} for details):
    
    \begin{pvaluefunctionbox}
\begin{enumerate}
  \item For each point in the calibration sample
  $ \mathcal{T}_{\rm cal} = \{ (\theta_1, X_1) \, \ldots  (\theta_{B'}, X_{B'})\}$,
  compute the test statistic $\lambda(X;\theta)=\hat \pi(\theta|X)$.
  \item Compute the indicator variable $Y:=\mathbb{I}(\lambda(X;\theta) < t)$ for a grid of values of $t \in \mathbb{R}$ at each sample point.
  \item Estimate the rejection probability
  $ F(t;\theta) := \pr_{X| \theta} \left( \lambda(X;\theta) \leq t \right)$
  via a regression of $Y$ on $\theta$ and $t$, which is monotonic in $t$.
\end{enumerate}
\end{pvaluefunctionbox}

The $F(\cdot; \theta)$ functions are effectively ``amortized p-value functions''  that allow the construction of confidence sets at all miscoverage levels $\alpha$ simultaneously; see Figures~\ref{fig:1D_example}b,~\ref{fig:2D_example}b,~\ref{fig:cs1_revised}c,~\ref{fig:brutus_1}c, and~\ref{fig:cs3-fig}c for some examples.  Alternatively, if one is only interested in confidence sets at a prespecified level $\alpha$  (as in our case studies),  then directly estimate ``critical values" for $\lambda$,  $F^{-1}(\alpha; \theta)$, at fixed $\alpha$ (Algorithm \ref{alg:estimate_cutoffs}).
   
    \item \textbf{Construct confidence sets}: Finally, compute Frequentist-Bayes sets $B_\alpha(X)$ by taking level sets of a transformation $F(\cdot)$ of $\hat{\pi}(\theta|X)$: $$B_\alpha(X) = \left\{ \theta \in \Theta \mid F(\hat{\pi}(\theta|X); \theta) > \alpha \right\} = \left\{ \theta \in \Theta \mid \hat{\pi}(\theta|X) > F^{-1}(\alpha; \theta) \right\}.$$ This computation is ``amortized'' with respect to $X$ in the sense that once we have learned the posterior distribution (Step 1) and the monotonic transformation (Step 2), no further training is needed for new $X$: we can just evaluate the confidence set $B_\alpha(X)$.
    \item \textbf{Check local coverage of constructed confidence sets}:  After building confidence sets, check that the actual coverage probability $\P_{X|\theta}(\theta \in \hat{B}_\alpha(X))$ for data $X$ generated at $\theta$ is indeed the same as the nominal value $(1-\alpha)$, for {\em every} $\theta$ in the parameter space. This check is not part of the construction of confidence sets per se, but provides the scientist with an independent diagnostic tool to assess her final results. See Algorithm~\ref{alg:estimate_coverage} for an efficient way to compute such diagnostics from held-out calibration data which we denote by $\T_{diag}$ in the flowchart. Figure~\ref{fig:1D_example}a-b, \textit{right}, illustrates how these diagnostics can help domain scientists identify regions of the parameter space where the confidence sets might under- or over-cover, even when parameter distribution of the target source is unknown.
\end{enumerate}

\noindent
In Appendix~\ref{sec:theory_and_algos}, we prove the following key properties of our framework, which are illustrated by the 2D example in Section \ref{sec:2D_example}, Figure~\ref{fig:2D_example}:
\begin{itemize}
    \item \textbf{Correct local coverage across the parameter space}: The Frequentist-Bayes confidence procedure achieves $(1-\alpha)100\%$ coverage for all parameter values regardless of the train distribution (when the universal set used for recalibration is large enough); see Figure~\ref{fig:2D_example}b, right, for a synthetic example. 
    \\ 
    \\
    Refer to Appendix~\ref{sec:validity} for theoretical results: specifically, see Theorem~\ref{thm:pval_right_coverage} for guarantees on validity of the p-value approach as the number of simulations $B'$ in the universal set  increases, Theorem~\ref{thm:pval_rate} for convergence rates, and Theorems~\ref{thm:valid_tests} and~\ref{thm:valid_tests_rate} for the corresponding results under the critical value approach.
    \item \textbf{Efficiency with well-specified models and no data set shift:} When the train and target distributions are the same, Frequentist-Bayes sets are optimal, with a smaller average size than other confidence sets with the same coverage properties; see  Figure~\ref{fig:2D_example}b, center, for a synthetic example.
    \\ 
    \\
    Refer to Appendix~\ref{sec:power} for theoretical results: specifically, see Theorem~\ref{thm:best_region} for a formal proof that, among all valid confidence sets, the Frequentist-Bayes set is the set that minimizes $\E \left[|A(X)|\right]$, where $|A(X)|$ is the size of a set $A$.
\end{itemize}

\begin{figure}[t!]
    \centering
    \includegraphics[trim={0 0 0 2cm},clip,width=0.8\columnwidth]{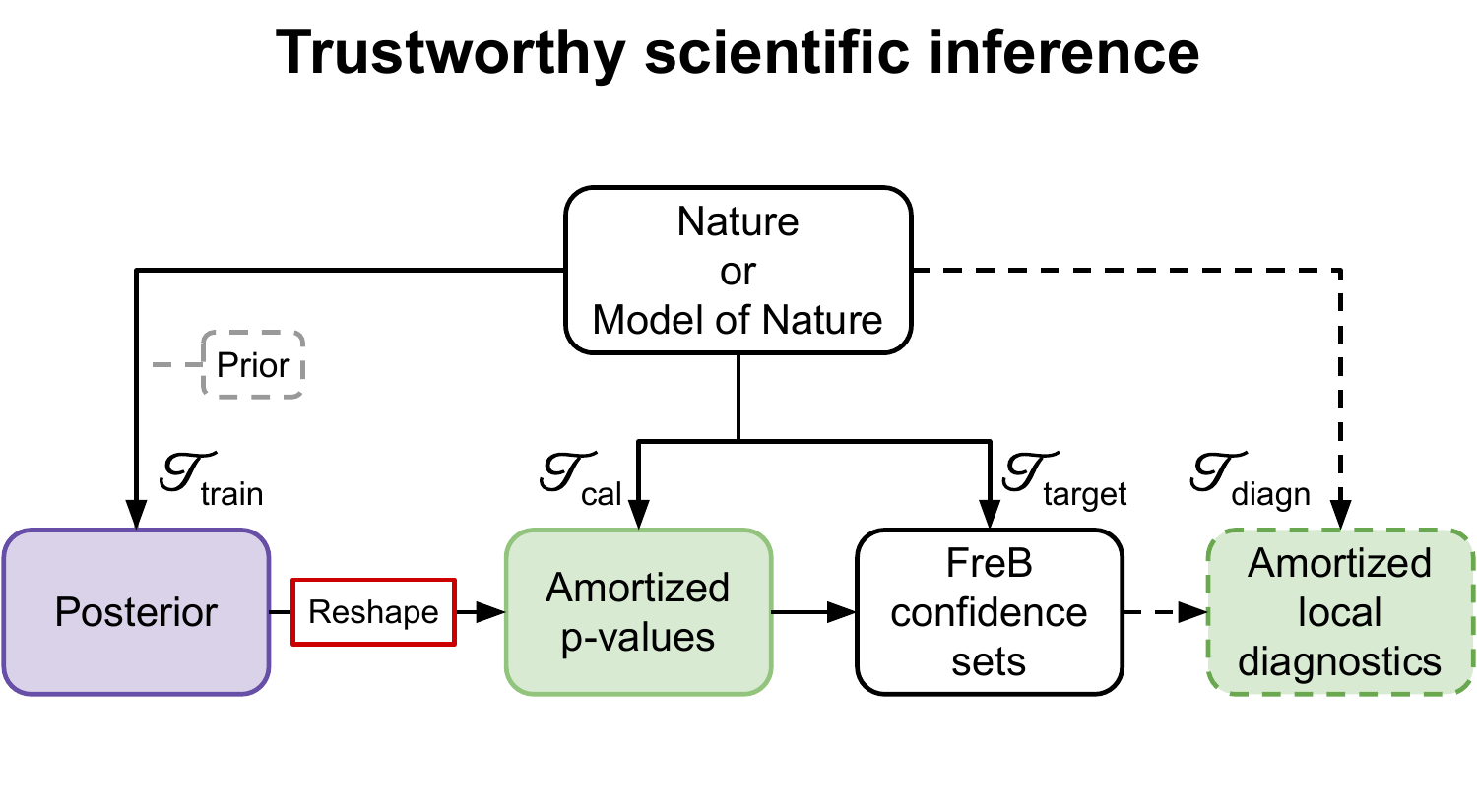}
    \caption{\small {\bf Flowchart of our protocol for trustworthy scientific inference.} The posterior can be derived using any method (AI, statistical or physics-based model); in this paper we use generative models to learn the posterior from train data $\mathcal{T}_{\rm cal}$. Our main method's contribution is proposing machine learning algorithms that efficiently compute (i) amortized p-values, and (ii) amortized local diagnostics; here shown as \textcolor{green}{green} boxes. Both computations are based on labeled examples, here denoted by $\mathcal{T}_{\rm cal}$ and $\mathcal{T}_{\rm diagn}$, respectively. The FreB confidence sets are computed on unlabeled target data, $\mathcal{T}_{\rm target}$. The local diagnostics branch (connected by dashed lines) represents an {\em independent} check of whether the final FreB confidence sets actually contain the true parameter with the stated probability, no matter what that parameter value is.}
    \label{fig:freB_branches}
\end{figure}

\begin{figure}[h!]
    \centering
    \includegraphics[width=0.95\linewidth]{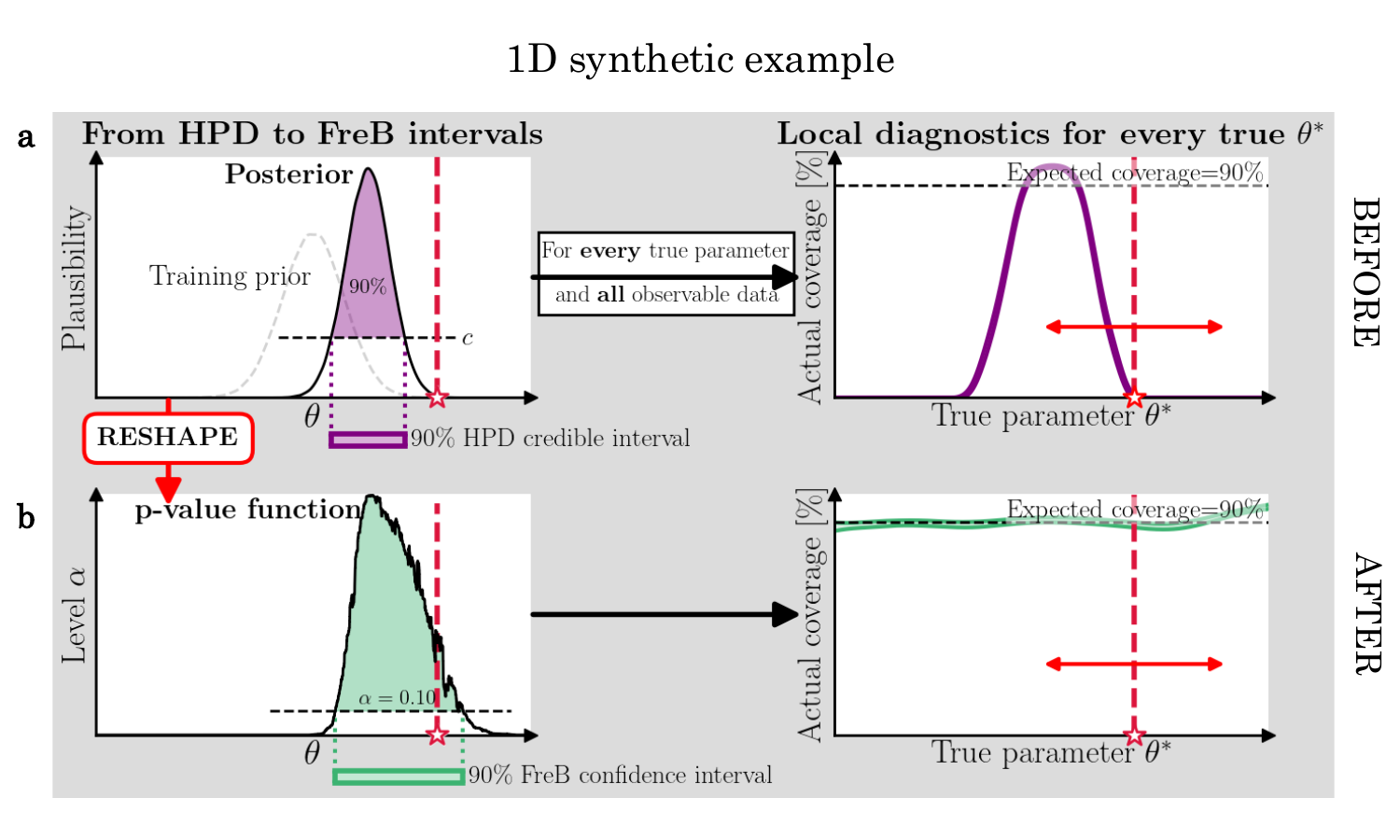}
    \caption{\small {\bf 1D synthetic example of the FreB protocol using masked autoregressive flow.} \textbf{Panel a:} 
    {\em Left,} The typical workflow for inferring parameters with neural density estimators and generative models is to learn the posterior from train data, then slice it to compute highest-posterior density (HPD) sets for new observations. The \textcolor{Purple}{purple} interval shows a 90\% HPD credible interval for an observation whose true parameter (\textcolor{red}{red} star) lies in the tail of the training prior. {\em Right,} Our diagnostic tool learns local coverage performance (that is, the empirical confidence level) for all scenarios of the truth using labeled examples. The diagnostic plot reveals that the actual chance (coverage probability, y-axis) that the HPD credible interval recovers the truth is far less than the expected coverage of 90\% for a wide range of values of $\theta^*$ (x-axis). \textbf{Panel b}: Performance after reshaping posteriors. {\em Left,} FreB reshapes the posterior density to a p-value function, which we then slice to obtain valid (``Frequentist-Bayes''; FreB) $(1-\alpha) 100$\% confidence intervals at $\alpha=0.1$ (\textcolor{Emerald}{green}). {\em Right,} The diagnostic plot indicates that the actual chance that FreB sets contain the true parameter value is close to the desired coverage probability for every instance of $\theta^*$.  Repeated observations at each $\theta^*$ are not required to learn FreB or the diagnostics---all computations are also ``amortized'': once learned for labeled examples, they can be deployed to new data without retraining.}  \label{fig:1D_example}
\end{figure}

\section{Results} \label{sec:results}

\subsection{2D synthetic example} \label{sec:2D_example}
\noindent We start with a 2D Gaussian mixture model example from the Bayesian simulator-based inference literature \citepmain{clarte_componentwise_2021,toni_approximate_2009,simola_adaptive_2021,lueckmann_benchmarking_2021} to illustrate the two key FreB properties described in Section~\ref{sec:methods}:
(i) FreB reshapes posteriors to confidence sets with nominal local coverage across the parameter space, and (ii) the confidence sets are efficient (with smallest average size) when train and target distributions are the same.\\

\noindent In this example, the true likelihood of the target data is given by a mixture of two normal distributions,
\begin{equation*}
    p(X\vert\theta) = \frac{1}{2}\mathcal{N}(\theta, I) + \frac{1}{2}\mathcal{N}(\theta, \sigma^2 I),
\end{equation*}
where $\sigma=0.1$, and the common mean $\theta$ is the parameter of interest.\\

\noindent We assume the posterior was learned using train data
 $$ \mathcal{T}_{\rm train} = \{ (\theta_1, X_1) \, \ldots  (\theta_B, X_B)\} \sim \pi(\theta) \widehat{p}(X | \theta),$$
with a localized prior $\pi(\theta) = \mathcal{N}(0, 2I)$ and a slightly misspecified forward model,
\begin{equation}\label{eq:misspecified_forward_model}
    \hat{p}(X\vert\theta) = \frac{1}{2}\mathcal{N}((1-\delta)\cdot \theta, I) + \frac{1}{2}\mathcal{N}((1-\delta)\cdot \theta, \sigma^2 I).
\end{equation}
with $\delta=0.25$. Figure~\ref{fig:2D_example} Panel a shows HPD sets from a flow matching estimator trained with $B=50{,}000$ such examples. The flow matching model is a good estimator of the posterior \citepmain{lipman_flow_2023}. Hence, it is not surprising that the inference results are good when the true $\theta^*$ is close to the center of the prior (``Well-aligned prior'', Panel a-\textit{center}). However, the model fails to provide valid inference for individual instances far from the center of the prior: Panel a-\textit{left} (``Misaligned prior'') indicates one such challenging case for a sample drawn from the likelihood at $\theta^*=(8.5, 8.5)$. More generally, the chance of covering the true $\theta^*$ with credible regions falls to zero as the true mean $\theta^*$ is increasingly further away from the center of the prior; Panel a-\textit{right} shows local coverage diagnostics.\\

\noindent Therefore we need to adjust the posterior to achieve reliable uncertainty quantification across the parameter space. With access to some additional data from the true data-generating process, we can reshape the posterior into valid confidence sets via the p-value function. In this example, we use calibration data
$$ \mathcal{T}_{\rm cal} = \{ (\theta_1', X_1') \, \ldots  (\theta_{B'}', X_{B'}')\} \sim r(\theta)  p(X | \theta),$$
with reference distribution $r(\theta) = \mathcal{N}(0, 36 I)$ over $\theta$. Figure~\ref{fig:2D_example} Panel b shows the FreB results from a monotone neural network that learn the p-value function from  $B'=30,000$ examples. As seen in Panel b-\textit{right}, inference results are valid across the parameter space: misaligned priors lead to larger confidence sets, while well-aligned priors yield tighter confidence sets.\\

\noindent Finally, as mentioned, posterior-based intervals do not guarantee valid inference even under idealized conditions with a well-specified forward model. Indeed, Figure~\ref{fig:2D_example_well_specified} in Appendix~\ref{sec:well_specified_2D} shows similar results with $\delta=0$; that is, when we train the flow matching model on data from the same data-generating process as the target data: Here, the FreB sets are even smaller than for the setting with a misspecified forward model. These results are consistent with Theorem~\label{thm:best_region} that states that frequentist-Bayesian procedures have optimal average power when train and target distributions are the same. That is, {\em FreB applied to posteriors (``before'') leads to valid confidence regions (``after''), and better alignment of train-test data in terms of both the prior and the forward model leads to higher constraining power and smaller regions on average.} \\\\

\begin{figure}[h!]
    \centering
    \includegraphics[width=1.0\columnwidth,trim=0 0 4cm 0,clip]{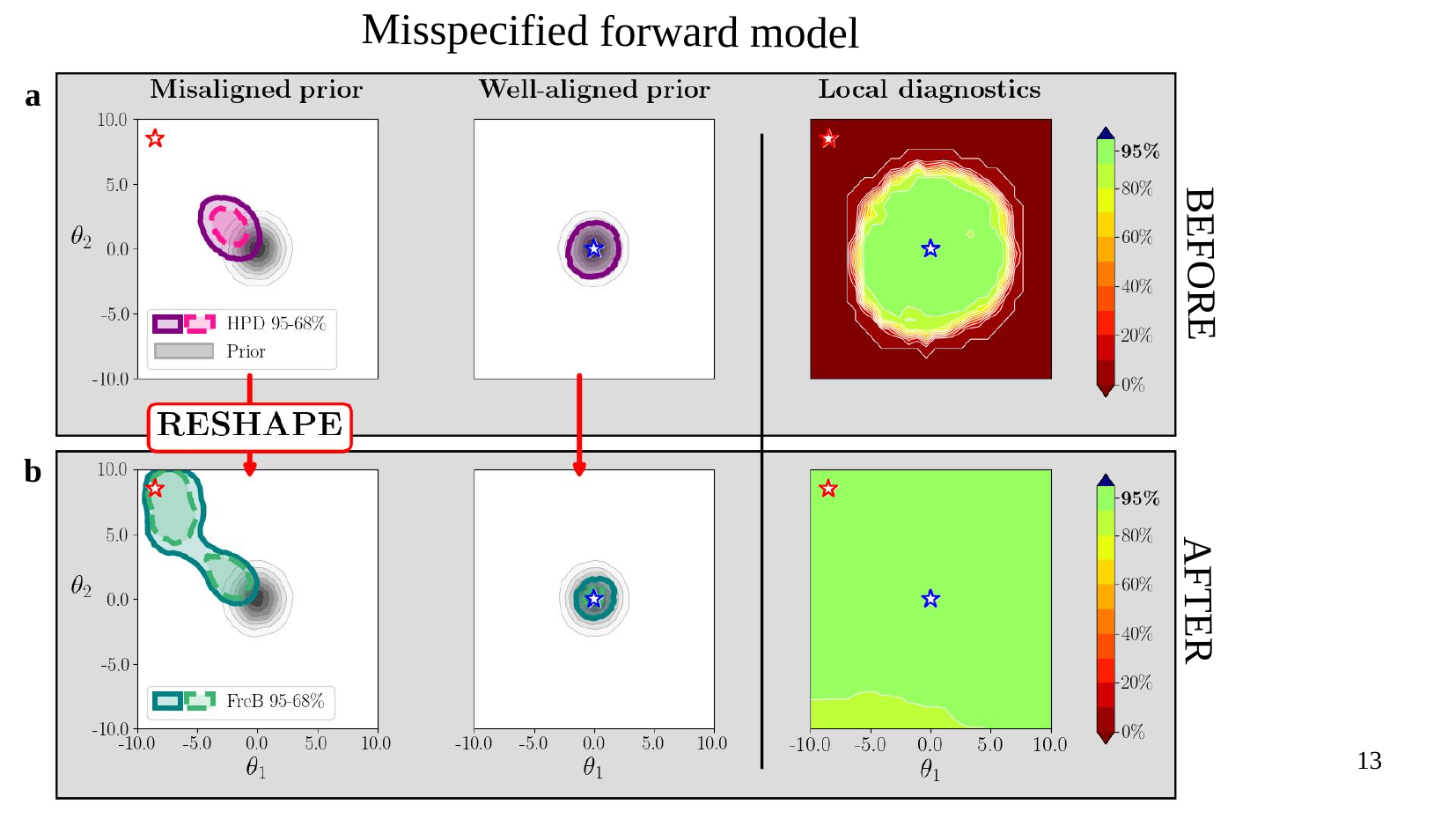}
    \caption{\small \textbf{2D synthetic example to illustrate the FreB protocol for a misspecified forward model and a localized prior.} The task is to infer the common mean $\theta$ of a mixture of two Gaussians with different covariances using a flow matching generative model trained with a localized prior centered at the origin. \textbf{Panel a:} 95\% and 68\% HPD sets for two scenarios where the prior is misaligned ({\em left}) versus well-aligned {(\em center}) with the true $\theta^*$  (\textcolor{red}{red} star). {\em Right,} Local diagnostics of 95\% HPD sets shows that the actual coverage of these sets can be very far from the nominal 95\% level, when the truth is further away from the center where the train data are concentrated. \textbf{Panel b:} After reshaping and slicing the posteriors as in Figure~\ref{fig:1D_example}b, we obtain the corresponding FreB sets. For all instances of $\theta$ and for all levels of $\alpha$, domain scientists can achieve the desired coverage level, here illustrated for the 95\% case in the {\em right} plot. That is, FreB sets are robust against misaligned training priors. The size of FreB sets is also smaller for well-aligned priors (compare {\em center bottom} plot with the {\em left bottom} plot). See Figure~\ref{fig:2D_example_well_specified} in Appendix~\ref{sec:well_specified_2D} for a similar example with a well-specified forward model.}
    \label{fig:2D_example}
\end{figure}

\subsection{Case study I: Reconstructing gamma rays to localize and measure astrophysical sources}
\label{sec:gamma_showers}

\noindent This case study illustrates how one can identify and reconstruct previously unknown astrophysical sources, which might be missed or misinterpreted if generative models are applied naively to infer key parameters of interest.
\\
\\
\noindent Gamma rays yield crucial information on violent phenomena (such as supernovas and black hole mergers) that take place in the cosmos. However, unlike most astronomical fields---from radio to x-ray astronomy---where photons are directly measured and their source direction can be traced back to their origin, tracing high-energy gamma rays requires an indirect approach. Earth's atmosphere is generally opaque to gamma rays, which can only be inferred from the cascades of secondary particles they create when they interact with the atmosphere (see Table~\ref{tab:case_studies}-I).
\\
\\
\noindent Therefore, a major challenge in high-energy astrophysics research is reconstructing properties of the original gamma ray (namely its energy and arrival direction) based on measurements of secondary particle types, spatial patterns, and arrival timing~\citepmain{chadwick_35_2021}; see Figure~\ref{fig:cs1_revised}a, \textit{left} and \textit{center}. This method of detection is further complicated by the fact that charged cosmic rays (e.g., protons or light nuclei), which are far more frequent, produce similar atmospheric showers of particles; see, e.g.,~\citemain{abreu_science_2025} for a discussion of the gamma-hadron separation challenge.
\\
\\
Here we consider the problem of estimating the parameter vector $\theta = (E, Z, A)$---representing the energy (E), zenith angle (Z), and azimuthal angle (A) of the incoming gamma ray---from simulated data $X$ that include the types of particles (electrons, photons, etc.), their count rate and density, and various properties (e.g., energy, direction) of secondary particles detected on the ground.
\\
\\
The generative model is trained with synthetic examples from an astrophysical source with the characteristics of the \textit{Crab Nebula}, a pulsar-wind nebula emitting the brightest and stable TeV signal in the northern hemisphere sky. The target data (gamma rays to be reconstructed)  originate from two astrophysical sources with the characteristics of:
\begin{itemize}
    \item \textbf{Markarian 421 (Mrk421)}, a well-studied blazar that is among the brightest known gamma-ray sources \citepmain{abdo_fermi_2011}; 
    \item \textbf{Dark Matter}, such as that expected from theoretical models of dark matter annihilation near the Galactic Center \citepmain{mukherjee_advances_2024,cirelli_dark_2024}.
\end{itemize}
All events are simulated using \texttt{Corsika} \citepmain{heck_corsika_1998} with an idealized detector that perfectly records all secondary particles reaching the ground. Their effective energy distributions are shown in Figure~\ref{fig:cs1_revised}a, \textit{right}. We learn the posterior distribution $\pi(\theta | X)$ by flow matching \citepmain{wildberger_flow_2023,lipman_flow_2023} and construct 90\% HPD and FreB sets for each event. When training with Crab Nebula data, we observe the following:
\begin{itemize}
    \item \textit{HPD sets miss target events that are rare relative to the parameters of the train examples.} The actual chance that a 90\% HPD set includes the true parameter is on average 86\% for the Crab Nebula, 81\% for Mrk421, and down to 73\% for gamma-rays originating from the DM signal. The poor performance on the DM source in particular is driven by a higher frequency of very-high-energy gamma rays like the ``rare'' 8.4 TeV event in Figure~\ref{fig:cs1_revised}b, {\em center}, resulting in a credible set biased toward lower energies. 
    \item \textit{The corresponding FreB sets correctly reflect constraining power.} We can reshape the {\em same} estimated posteriors from flow matching to create FreB sets with valid and informative uncertainties. In Figure~\ref{fig:cs1_revised}c, each individual FreB set is now at the 90\% nominal value regardless of the origin of the gamma ray. This adjustment allows us to reliably identify and reconstruct different astrophysical sources---like a Dark Matter annihilation signal---as long as we have labeled examples (calibration data) that follow the same physical process as the target.
\end{itemize}
  
\begin{figure}[p]
    \centering
    \includegraphics[width=1.0\linewidth]    {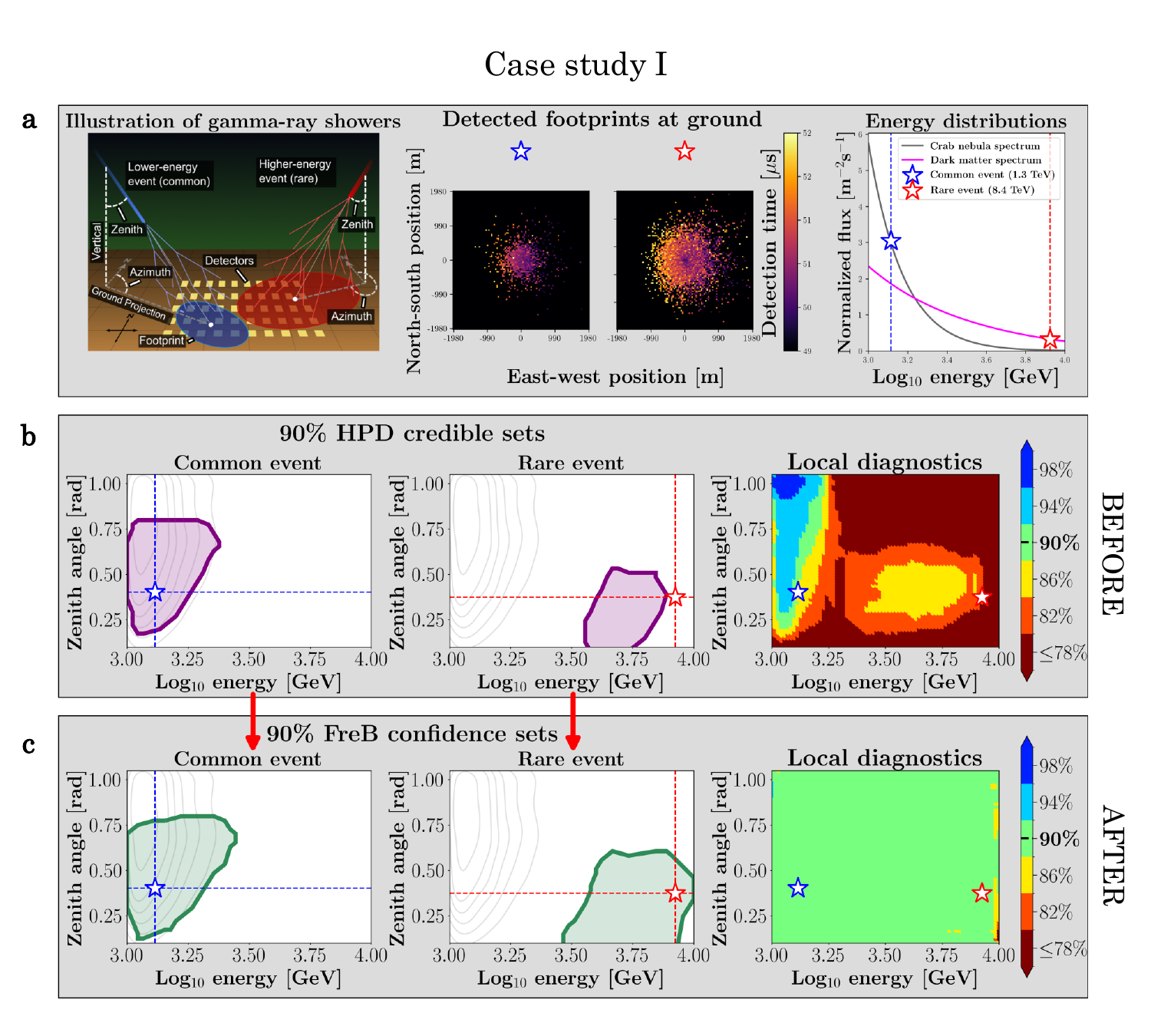}
    \caption{\small \textbf{Reliable reconstruction of gamma-ray sources.} 
    \textbf{Panel a:} \textit{Left,} Visual representation of atmospheric showers and their parameters of interest. \textit{Center,} Detected footprints at ground level for two example events. \textit{Right,} Distribution of gamma-ray energies for the Crab Nebula (training data source) and Dark Matter (possible target source).  Gamma rays at lower energies (e.g. the example in \textcolor{blue}{blue}) are more commonly observed for the Crab Nebula than for the Dark Matter source, whereas gamma rays at higher energies  (e.g. the example in \textcolor{red}{red}) are rare for the Crab Nebula relative the Dark Matter source. \textbf{Panel b:} Parameter estimates when learning posterior with Crab Nebula data. {\em Left and center,} 90\% HPD sets for the common and the rare event. The estimates for the rare event are biased towards lower energies; the credible region has an actual coverage that is smaller than what is expected. {\em Right,} Diagnostics plot of local coverage of 90\% HPD sets reveals undercoverage, especially at higher energies. \textbf{Panel c:} Parameter estimates after reshaping the posteriors. {\em Left and center,} The adjusted 90\% FreB sets provide valid and informative uncertainty. {\em Right,} Local diagnostic plot confirms that 90\% FreB sets are  uniformly valid across the parameter space. (The azimuthal angle is not shown in the figure)}
    \label{fig:cs1_revised}
\end{figure}

\subsection{Case study II: Inferring properties of Milky Way stars using two different galactic models} \label{sec:Brutus_study}

In this case study, we show how different models (priors) of nature can lead to seemingly conflicting scientific conclusions when using generative AI---an apparent paradox which FreB can resolve under the assumption that the data used to learn the FreB transformations encode the same likelihood as the target.
\\ \\
Galaxies are formed through a complex process of hierarchical merging and assembly, with stars migrating from star clusters, which combine to form small galaxies, and which then merge to make galaxies such as our own Milky Way. Recovering the exact positions of stars, their motions through the sky, and their ages and chemical compositions allows us to reconstruct the structure, evolution, and assembly history of the Milky Way as well as the universe beyond \citepmain{deason+24}. These discoveries have traditionally been made by measuring stellar \textit{spectra}---``fingerprints'' of emitted light across different wavelengths---with the unprecedented depth and breadth of next-generation instrumentation, such as DESI \citepmain{koposov_desi_2024} (see also Case Study III). However, these surveys traditionally can only target the brightest $<1\%$ of stars visible through imaging. Using \textit{photometry}---the brightness of a star in images taken at different wavelengths---therefore opens up the ability to do much more comprehensive studies of Galactic structure and formation at the cost of individual sources having larger parameter uncertainties \citepmain{green+19,anders+22,speagle+24}.
\\
\\
\noindent Analyses of stellar photometry often start with a Galactic model, which describes the galaxy's stellar population and a forward model (likelihood), which maps stars to their expected evolutionary parameters and associated observables according to physical theory. A typical Galactic model consists of three components: a ``thick disk'', a ``thin disk'', and a ``stellar halo''. Each component represents a subpopulation of objects which together capture much of the Milky Way Galaxy's structure. This structure can be summarized in terms of the empirical age-metallicity relationship implied by the mixture of galactic components, as rendered in Figure~\ref{fig:brutus_1}a-\textit{left}. When a new star is identified, the evolving mixture of these components along the star's line-of-sight then naturally induces a prior distribution for that star's properties.
\\
\\
We focus on five key stellar properties that define $\theta=(\log g, T_{\text{eff}}, [Fe/H]_{\text{surf}}, \log L, \log d)$. This parameter includes the star's (log) surface gravity, effective temperature, surface metallicity (i.e. overall chemical enrichment relative to our sun), (log) luminosity, and its (log) distance from the Sun. (Refer to the online supplement on case study II for the true parameter values.) Our priors are derived according to stellar evolution theories using \texttt{brutus} \citepmain{speagle_deriving_2025}, an open-source Python package tailored for fast stellar characterization. The simulated photometry $X$ replicate the photometric bandpasses found in the 2MASS \citepmain{skrutskie_two_2006} and Pan-STARRS1 \citepmain{panstarrs_2016} surveys which span wavelengths in the near-infrared and optical, respectively.
\\
\\
We propose two competing models of our Milky Way galaxy:
\begin{itemize}
\item  \textbf{Model H (halo-biased)} increases the contribution of the halo by extending the metallicity range for stars in the Milky Way's periphery beyond typical models; e.g.~\citemain{anders+22,speagle_deriving_2025}. As the halo is generally comprised of older stars accreted from other small galaxies, this expanded model allows a greater chance that this new star could be associated with more recent halo accretion events.
\item \textbf{Model D (disk-biased)} diminishes the contribution of the halo, instead emphasizing objects typically found within the Galactic thin and thick disks. As the disk components are generally comprised of younger stars that have formed much more recently within the Milky Way (i.e. are not accreted), this model makes stronger assumptions about this new star originating from within our Galaxy.
\end{itemize}
Figure~\ref{fig:brutus_1}a-\textit{right} shows some of the pairwise marginals of the priors induced by these models.
These models are used to label a newly discovered stellar object at the Galactic sky coordinates $(\ell, b) = (70^\circ, 30^\circ)$. We estimate the posteriors, $\pi_H(\theta\vert X)$ and $\pi_D(\theta\vert X)$, with masked autoregressive flows \citepmain{papamakarios_masked_2017,tejero-cantero_sbi_2020} and construct 90\% HPD and FreB sets using priors $\pi_H(\theta)$ and $\pi_D(\theta)$, respectively. Appendix~\ref{sec:diagnostics} describes local diagnostics. In this case study, we observe the following:
\begin{itemize}
    \item \textit{HPD sets show stark disagreement for different galactic models, and with the true parameter.} For instance, under Model D, the estimated posterior $\hat{\pi}_D(\theta \mid X)$ of the example star (whose true parameter value is indicated with a red marker in Figure \ref{fig:brutus_1}b) significantly overestimates $[Fe/H]_{\text{surface}}$. Even Model H's posterior fails diagnostic tests, with HPD sets that rarely include all five stellar properties at once; refer to the online supplement for this case study for local coverage.
    \item \textit{FreB sets resolve the apparent paradox between different galactic models while ensuring nominal 90\% coverage of the true parameter.} Figure~\ref{fig:brutus_1}c displays cross-sections of the FreB sets which simultaneously include all five stellar properties. Appendix~\ref{sec:power} provides further insight into FreB sets' statistical power when good prior information is available.
\end{itemize}

\begin{figure}[p]
    \vspace*{-0.5in}
    \centering
    \includegraphics[width=0.98\linewidth]{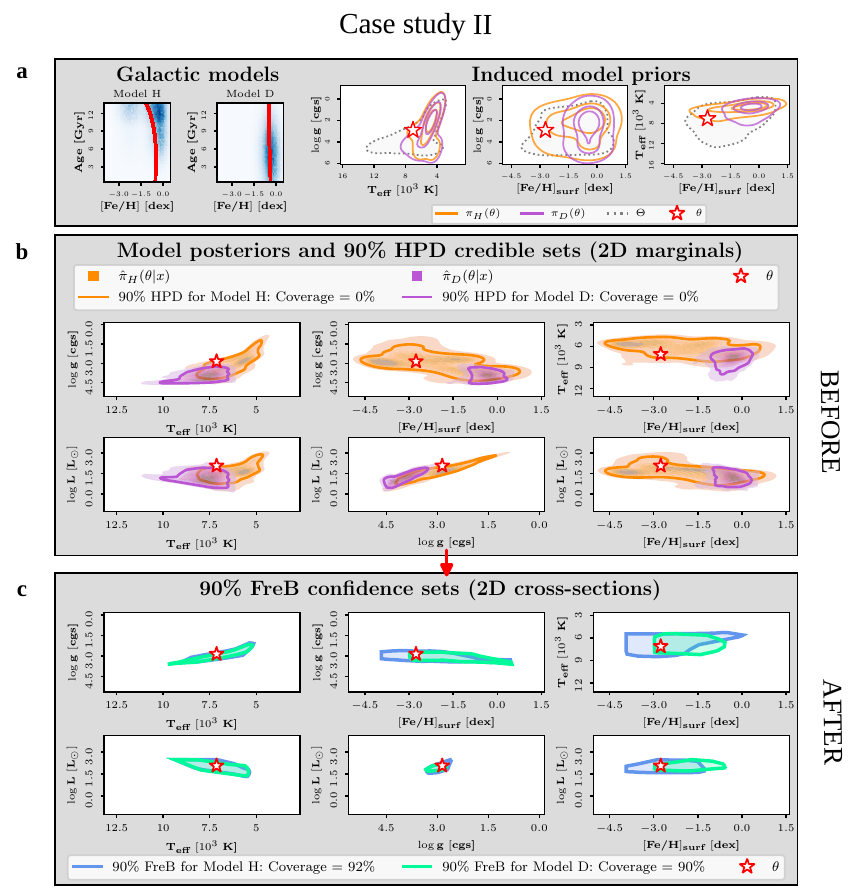}
    \caption{\small \textbf{Resolving tension between differing galactic models.} 
    \textbf{Panel a:} {\em Left,} The age-metallicity relationships implied by two galactic models. The \textcolor{red}{red} curves indicate the average internal metallicity for different ages. \textbf{Panel a:} {\em Right,} Surface-level priors induced by the galactic models along line of sight $(\ell, b) = (70^\circ, 30^\circ)$. Log surface gravity ($\log g$), effective temperature ($T_{\text{eff}}$), and surface metallicity ($[Fe/H]_{\text{surf}}$) are shown. The true parameter for an example star is marked in \textcolor{red}{red}, unknown at inference time. \textbf{Panel b:} Tension between Models \textcolor{orange}{H} and \textcolor{magenta}{D}'s posteriors at $X \sim p(X\vert\theta)$. Solid contours for each model show 90\% credible regions of highest posterior density, marginalized. The HPD regions have $0\%$ local coverage. Note that stellar distance has been marginalized for display clarity. \textbf{Panel c:} 90\% FreB sets for $\theta$ for Models \textcolor{orange}{H} and \textcolor{magenta}{D}. Each subplot shows cross-sections of the FreB sets at the true parameter. Local coverage for each FreB set is close to the nominal $90\%$ level.}
    \label{fig:brutus_1}
\end{figure}

\subsection{Case study III: Estimating stellar parameters with cross-matched astronomical catalogs under selection bias} \label{sec:Gaia_study}

\noindent In this case study, we go beyond using simulated data to demonstrate how our framework can handle observational studies with selection bias.
Using labeled examples, we adjust initial models pre-trained on survey data that suffer from selection bias and systematics.
\\
\\
\noindent Selection bias is a prevalent issue across various scientific fields, particularly in astronomical surveys, because of observational limitations and cost considerations.
For example, large-scale astronomical flagship surveys such as \textit{Gaia} \citepmain{gaia_collaboration_gaia_2023} and the Sloan Digital Sky Survey (SDSS, \citemain{almeida_eighteenth_2023}), and soon the Rubin Observatory Large Survey of Space and Time (LSST, \citemain{ivezic_lsst_2019}), do not uniformly observe (i.e., randomly sample) sources (e.g., stars and galaxies) across the sky due to complicated sampling mechanisms and systematics; see Section 2 of \citemain{tak_six_2024}.
Additionally, these surveys can only observe the brightest sources due to instrumental limitations, leading to further survey incompleteness and biased sampling of the underlying population \citepmain{malmquist_1922,malmquist_1925}. Furthermore, the vast majority of these sources are photometrically observed, with only a small subset followed up with higher-resolution spectroscopic measurements---such as the Sun-like star Figure~\ref{fig:cs3-fig}a-\textit{right}---that can be used to more precisely determine the properties of these sources; that is, provide more precise labels.\\
\\
\noindent Here we illustrate the challenge of {\em data set shift} due to selection bias---the phenomenon that the train data deviate significantly in distribution from the targets of interest because of observation limitations and label systematics \citepmain{laroche_closing_2025}---and how FreB can use data from follow-up surveys to ensure trustworthy inference in the presence of model misspecifications.\\
\\
\noindent Estimates of stellar parameters---e.g., surface gravity $\log g$, effective temperature $T_{\text{eff}}$, and metallicity $[Fe/H]$---are used in studies aimed at answering fundamental questions in astrophysics, from modeling stellar evolution \citepmain{minchev_estimating_2018} to understanding galaxy formation \citepmain{lagarde_deciphering_2021}.
In this case study, using a cross-match of stellar labels from APOGEE Data Release 17 \citepmain{majewski_apache_2017} and stellar spectra from Gaia Data Release 3 \citepmain{gaia_collaboration_gaia_2023}, we estimate the parameter vector $\theta = (\log g, \; T_{\text{eff}}, \; [Fe/H)])$ from data $X$ consisting of 110 Gaia BP/RP spectra coefficients \citepmain{gaia_collaboration_gaia_2023,laroche_closing_2025}. These coefficients trace extremely low-resolution spectral data more similar to imaging data than traditional high-resolution spectroscopy from surveys such as APOGEE.\\
\\
\noindent  We perform this estimation task in two data settings (see Figure~\ref{fig:cs3-fig}a for details):
\begin{itemize}
 \item \textbf{No selection bias}: the initial model is pre-trained on labeled data with the same distribution as the target stars of interest.
 \item \textbf{Selection bias (data set shift)}: the initial model is pre-trained on labeled data that are primarily larger, brighter giant branch (GB) stars, where APOGEE measurements are most precise, which are different from the target stars of interest, primarily smaller, fainter main sequence (MS) stars like our Sun along with low-metallicity stars. 
\end{itemize}

\noindent The setting with no selection bias includes training examples representing the full range our parameter space, as seen in the Kiel diagram in \ref{fig:cs3-fig}a-\textit{left}.
As a ``proof-of-concept'', we censor the remaining data to reflect a scenario where training data in the target region of parameter space are missing due to instrumental limitations.
We then assume that the censored data are later collected in a targeted follow-up survey and used to diagnose and adjust the initial posterior model.
This censoring pattern is depicted in Figure~\ref{fig:cs3-fig}a-\textit{middle}.
In our case, we estimate the posterior distribution $\pi(\theta| X)$ with masked autoregressive flows \citepmain{papamakarios_masked_2017,tejero-cantero_sbi_2020} and construct 90\% HPD and FreB sets in both data settings with and without selection bias (see the online supplement for case study III for details). 
More generally, our initial model could be purely based on synthetic data from a physics-based simulator, like Prospector \citepmain{johnson_stellar_2021}, or it could represent a large ``foundation'' model pre-trained on broad data, like SpectraFM \citepmain{koblischke_spectrafm_2024}.\\
\\
\noindent In this case study, we observe the following: 
\begin{itemize}
    \item \textit{FreB enables valid and precise stellar parameter estimation when selection effects and label systematics are minimized} (see Figure~\ref{fig:cs3-fig}b).
    Without model misspecification (prior and likelihood), HPD credible sets have high constraining power. They have correct (marginal) coverage if one \textit{averages} over the entire parameter space, but each HPD set undercovers in parameter regions that are underrepresented in the labeled set (c.f., Appendix~\ref{sec:confidence_procedures}).
    After reshaping posteriors, local coverage is ensured across the entire parameter space, while maintaining tight parameter constraints.\\
    \item \textit{FreB provides reliable parameter constraints and interpretable diagnostics even under selection effects and systematics} (see Figure~\ref{fig:cs3-fig}c).
    With a model pre-trained primarily on GB stars, there is a near 0\% chance that traditional HPD sets contain the true parameter of a MS or metal-poor star, which would fall outside of the bulk of the train data with respect to the underlying parameters (see the online supplement for this case study for further details). 
    However, by reshaping posteriors with a follow-up survey and FreB, we can ensure the desired local coverage across the entire parameter space, albeit with larger uncertainties in parameter regions that are underrepresented in the train data.\\
\end{itemize}

\begin{figure}[p]
    \vspace*{-0.5in}
    \centering
    \includegraphics[width=1.0\linewidth]{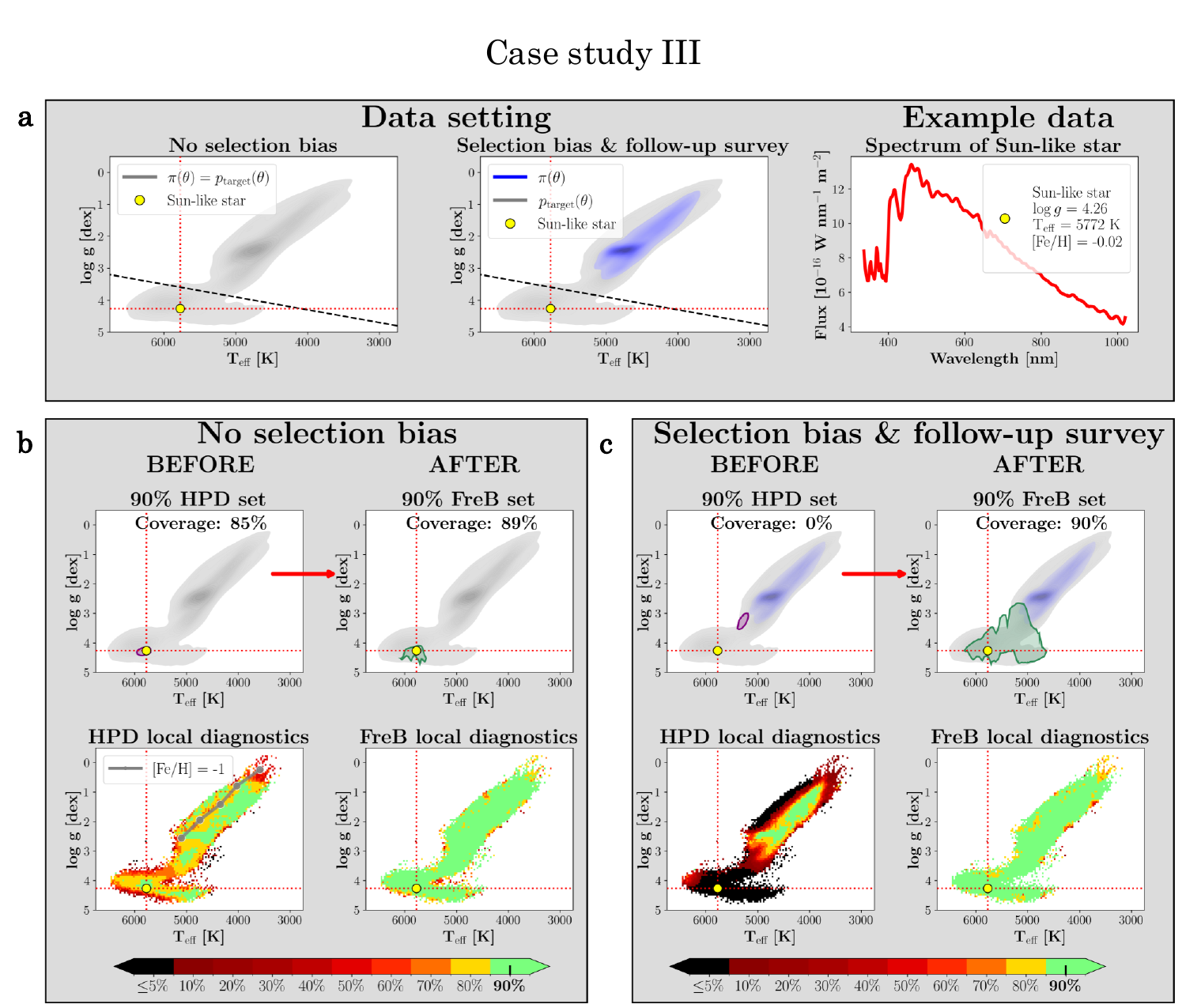}
    \caption{\small \textbf{Reshaping and diagnosing pre-trained models with labeled data.}
    \textbf{Panel a:} Kiel diagrams displaying the training distribution of stellar surface gravities $\log g$ against the corresponding effective temperatures $T_{\text{eff}}$ for two data settings, where the labeled data have the same distribution of the target stars of interest ({\em left}, ``No selection bias''), and where the labeled data, primarily GB stars, are different from the target stars, primarily MS and low-metallicity stars ({\em center}, ``Selection bias \& follow-up survey''). 
    The marked \textcolor{Dandelion}{yellow} dots represents the true stellar parameter for an example Sun-like target star, with its spectrum plotted ({\em right}, ``Example data'').
    \textbf{Panel b:} Under no selection bias, HPD sets have the desired 90\% coverage on average for the entire target population, but actual coverage for individual stars can vary.
    For example, credible regions for stars along the shown evolutionary metallicity track (\textcolor{gray}{gray}; [Fe/H]= -1.0 dex) tend to be too small ({\em left}, ``Before'').
    After reshaping posteriors, FreB sets all contain the true parameter with 90\% probability. 
    The sets still have the same high constraining power; that is, they are small in size like for the Sun-like example star ({\em right}, ``After'').
    \textbf{Panel c:}  However, under ``proof-of-concept'' censoring that reflects possible selection effects, HPD sets have a near-0 chance of including the true parameter for low-metallicity stars and main sequence stars like the Sun-like star ({\em left}, ``Before''). After reshaping posteriors with the censored data representing a ``follow-up survey'', FreB sets more accurately reflect the true uncertainty in the labels: the sets are larger but the chance of each set including the true parameter is now at the desired 90\% probability
    ({\em right}, ``After''). See the supplement on case study III for more details.}
    \label{fig:cs3-fig}
 \end{figure}

\section{Conclusions}
A direct application of generative models (neural posterior inference) can fail in two critical areas---biased estimates and lack of local validity---which lead to misleading scientific conclusions, even in ideal conditions with an all-knowing simulator and correctly labeled train data. These limitations produce a need for methods that ensure trustworthy scientific inference with generative models, as such models are increasingly used for inference tasks across the sciences. Here, we have presented a general, amortized procedure for transforming estimated posteriors into statistically valid Frequentist-Bayes confidence sets. FreB sets contain the true parameters with the desired probability regardless of what the true parameter values are, as long as we have a set of labeled examples (calibration data) from the same data-generating process (likelihood) as the target data. Moreover, if the domain scientist has good prior knowledge and a well-specified forward model, or equivalently, is able to collect train data from a distribution aligned with the target data, then FreB sets will return tighter parameter constraints than any other valid procedure, including procedures that do not use prior distributions.
\\
\\
\noindent Our Frequentist-Bayes protocol addresses the limitations of neural posterior inference and is a readily usable tool in a variety of experimental settings. When simulators are used for inference, FreB can ensure valid results even when faced with model misspecifications. Furthermore, it is effective in observational studies with partially labeled data, often challenged by selection effects and systematics.
\\
\\
\noindent As scientific data sets rapidly grow and physics-based models become increasingly complex, our FreB protocol is an advancement in ensuring that broadly used and state-of-the-art generative models are more trustworthy for scientific inference by providing the statistical foundations and diagnostics needed for accurate uncertainty quantification.
Future directions include developing a mathematically principled and physically grounded framework that integrates multi-instrument and multi-modal data to optimally constrain primary parameters of interest.
We envision incorporating our method into pipelines that use foundation models \citepmain{bommasani_opportunities_2022} by applying the FreB protocol after fine-tuning foundation models for specific use cases.
A related opportunity is detector optimization, and understanding how to tune instrument parameters for different observations.

\section{Discussion of relation to other methods}

\subsection{Classical statistical inference and approximate likelihood methods.}

\noindent FreB builds on the classical construction of confidence sets via inversion of hypothesis tests, which dates back to Neyman's seminal work \citepmain{neyman_problem_1935}. While this method has a long-standing tradition in scientific inference, it initially required tractable likelihoods and closed-form critical values, limiting its applicability. More recent advancements, especially within high-energy physics (HEP), have extended the Neyman construction to likelihood-free inference (LFI) scenarios \citepmain{feldman_unified_1998,cowan_asymptotic_2011,cranmer_practical_2015,schafer_constructing_2009}. These pioneering efforts highlighted critical open problems, such as efficiently constructing Neyman confidence sets in general settings, evaluating coverage without prohibitive computational costs, and effectively implementing hybrid statistical techniques \citepmain{cousins_lectures_2024,cousins_treatment_2006}. Building upon these foundations, several recent machine-learning-based techniques approximate the likelihood-ratio test (LRT) statistic and rely on asymptotic $\chi^2$ cutoffs to form confidence sets \citepmain{cranmer_approximating_2015}. While these approaches have shown promising performance in particle collider physics, the same methods can struggle with small-sample sizes or irregularities introduced by complex likelihoods \citepmain{algeri_searching_2020} and numerical estimation errors.
\\
\\
To address these limitations, \citemain{dalmasso_confidence_2020} developed \texttt{ACORE}, a method that estimates LRT cutoffs with machine learning techniques without resorting to asymptotic approximations, hence improving performance in limited-sample settings. Subsequently, \citemain{dalmasso_likelihood-free_2024} proposed Likelihood-Free Frequentist Inference (LF2I) as a modular framework of Neyman's inversion for likelihood-free inference and diagnostics, generalizing the approach to any test statistic. Other LF2I works based on approximate likelihoods include e.g. \citepmain{the_atlas_collaboration_implementation_2025,al_kadhim_amortized_2024}. FreB also falls under the general umbrella of LF2I but derives confidence sets directly from estimates of posterior distributions: the choice of a posterior test statistic allows the practitioner to take advantage of recent advances in the generative AI literature, as well as potentially leverage good prior knowledge to construct valid {\em and} small confidence sets (Appendix~\ref{sec:power}; \citemain{carzon_focusing_2025}).
\\
\\
\noindent More traditional techniques in the LFI literature that are based on posterior estimates usually fall under Approximate Bayesian Computation (ABC) methods. While ABC techniques have been very popular in many scientific fields---see for example \citemain{beaumont_bayesian_2004,beaumont_approximate_2010,sunnaker_approximate_2013}---they do not guarantee that the resulting credible regions are valid or precise.\\

\subsection{Bayesian SBI and conformal inference}

\noindent Recent advancements in simulation-based inference (SBI) have primarily come from cross-pollination with the machine learning literature \citemain{cranmer_frontier_2020,burkner_simulations_2025}. Several works have proposed learning algorithms that leverage novel neural density estimators such as normalizing flows (e.g., \citemain{papamakarios_fast_2016,lueckmann_flexible_2017,papamakarios_masked_2017,miller_truncated_2021,radev_jana_2023}), diffusion models (e.g., \citemain{geffner_score_2022,sharrock_sequential_2024,linhart_l-c2st_2023}), flow matching (e.g., \citemain{wildberger_flow_2023,holzschuh_flow_2024}) and consistency models (e.g., \citemain{schmitt_consistency_2024}). These methods have enabled a revolution in the inference capabilities available to domain scientists, but are not equipped with the statistical guarantees required by the rigor of the scientific method, as shown in, e.g., \citemain{hermans_crisis_2022} and \citemain{dalmasso_likelihood-free_2024}. The work of \citemain{delaunoy_towards_2022} successfully alleviates this issue by enforcing a balancing condition that yields more conservative posteriors, resulting in highest-posterior-density regions with approximate {\em average} coverage. Nonetheless, a posterior estimator that largely under-covers in some regions of the parameter space and largely over-covers in other regions would still be considered valid under the notion of average coverage. Our FreB work targets the stronger notion of validity defined in Equation~\eqref{eq:validity}, which ensures {\em local} coverage across the entire parameter space.
\\
\\
\noindent  Several methods have also been proposed to assess whether an estimated posterior distribution is consistent with the true posterior implied by the prior and likelihood \citemain{zhao2021diagnostics,linhart_l-c2st_2023,lemos2023sampling}. In addition, some work recalibrates the posterior when inconsistencies are found \citemain{dey2022calibrated}, and recent papers even adjust the prior  if given calibration data from the true joint distribution over {\em both} parameters and observable data \citemain{wehenkeladdressing,ruhlmann2025flow}. However, note that the above-mentioned simulation-based calibration (SBC) or “posterior calibration” methods differ from FreB. FreB explicitly aims to guarantee frequentist coverage of the true latent parameters: $\Pr_{X|\theta}(\theta \in C(X)) \ge 1 - \alpha$, no matter the value of $\theta$. This property ensures that coverage holds for all (unknown) parameter values, even when the prior is poorly chosen or the likelihood is misspecified — as long as the calibration sample accurately reflects the  conditional distribution of $X$ given $\theta$. Even perfectly estimated posteriors do not generally ensure this form of coverage.
\\
\\
\noindent Besides SBI-specific techniques, conformal methods have also become extremely popular in the machine learning community and beyond. Although conformal methods were originally developed for predictive problems, they can also enhance the marginal coverage properties of approximate Bayesian methods (see, e.g., \citemain{baragatti_approximate_2024} and \citemain{patel_variational_2023}). However, they do not guarantee frequentist (local) coverage across all parameter values. 

\subsection{\texttt{WALDO} and prediction-powered inference}

\noindent Several studies have used prediction methods on simulated datasets for inference on real observations, often without incorporating the necessary corrections to ensure valid uncertainty quantification (e.g., \citemain{dorigo_deep_2022,gerber_fast_2021,ho_approximate_2021}). To address this issue, \citemain{masserano_simulator-based_2023} introduced \texttt{WALDO}, a method that can take predictions from any machine learning algorithm and transform them into confidence sets with frequentist guarantees. Our FreB approach differs in that we estimate the full posterior distribution from simulated data rather than just point predictions, allowing us to derive confidence sets that are typically smaller and more accurate than those obtained through \texttt{WALDO}, particularly in cases where the posterior is multimodal or asymmetric.\footnote{For some illustrative examples, see the online supplement on Waldo vs FreB, which can be found under supplementary material in Section~\ref{sec:supplementary_material}. The results are consistent with the theoretical result on the optimality of Frequentist-Bayes sets in Appendix B.7.}
\\
\\
\noindent Prediction-powered inference \citepmain{angelopoulos_prediction-powered_2023} has also emerged as a promising framework that leverages both labeled training data $(X_1, Y_1),\dots,(X_n, Y_n)$ and additional unlabeled covariates $X_{n+1},\dots,X_{n+m}$ to enhance inference. However, this approach fundamentally differs from our setting, as its primary goal is to infer global parameters characterizing the data-generating process of the entire set, rather than constructing confidence sets for individual instances.

\subsection{Bayesian-frequentist approaches}

\noindent The interplay between Bayesian and frequentist methodologies has been explored in various contexts. \citemain{good_bayesnon-bayes_1992} proposed using the Bayes Factor as a frequentist test statistic, but only in scenarios where likelihoods are tractable. Similarly, \citemain{pratt_length_1961,yu_adaptive_2018,hoff_bayes-optimal_2023} showed that, when the likelihood is available, confidence sets derived from posterior distributions tend to be more efficient (in terms of expected volume) than those based purely on likelihood ratios. Our work extends these results to LFI settings, where likelihoods are intractable and confidence sets are constructed from posterior estimates obtained via generative models.
\\
\\
\noindent In addition, \citemain{wasserman_frasian_2011,fong_conformal_2021} showed that conformal inference can be applied to Bayesian models to construct prediction sets with valid frequentist coverage. Concretely, in that setting, one models the Bayesian predictive distribution $Y_{n+1} \mid X_{n+1}, (X_{n}, Y_n), \ldots, (X_{1}, Y_1)$ starting from a statistical model for $Y \mid \theta, X$. However, as previously mentioned, conformal methods only guarantee marginal coverage over $\theta$, which does not imply valid confidence sets for every parameter value. As a result, conformal procedures that exhibit severe under-coverage in some regions and strong over-coverage in others might still satisfy conformal guarantees, but would fail within our setting. In contrast, FreB provides confidence sets that maintain instance-wise validity  across the entire parameter space, offering stronger guarantees for inference in scientific settings where one has to ensure the reliability of conclusions regardless of the specific source that generated an observation.

\section{Supplementary material} \label{sec:supplementary_material}

\noindent We refer the reader to the following supplementary online material at \url{https://lee-group-cmu.github.io/tsi/} for additional results, and for details on the synthetic examples and case studies:\\
\begin{enumerate}
    \item Supplement on 1D synthetic example
    \item Supplement on 2D synthetic examples
    \item Supplement on Waldo versus FreB
    \item Supplement on case study I
    \item Supplement on case study II
    \item Supplement on case study III
    \item Supplementary figures
\end{enumerate}




\section{Acknowledgments}

\noindent The authors would like to thank the STAtistical Methods for the Physical Sciences (STAMPS) Research Center at Carnegie Mellon University for support. ABL is grateful to Mikael Kuusela, Jing Lei and Larry Wasserman for valuable discussions. This material is based upon work supported by NSF DMS-2053804, and the National Science Foundation Graduate Research Fellowship Program under Grant No DGE2140739. Any opinions, findings, and conclusions or recommendations expressed in this material are those of the authors and do not necessarily reflect the views of the National Science Foundation. RI is grateful for the financial support of FAPESP (grant 2023/07068-1) and CNPq (grants 305065/2023-8 and 403458/2025-0).

\section{Author contributions}

\noindent LM and ABL conceived the project and designed its components. LM, RI, and ABL designed the FreB protocol. LM and JC wrote the original code and carried out the synthetic examples. AS carried out case study I under the guidance of TD, MD and ABL. JC carried out case study II under the guidance of JS and ABL. JDI and ACHRJ carried out case study III under the guidance of JS and ABL. ABL together with JC, LM, JDI, JS, and RI took the lead in writing the manuscript. All authors reviewed and approved the manuscript.

\clearpage

\printbibliography[category=main,title={References}]

\clearpage

\appendix

\section{Theory and algorithms}\label{sec:theory_and_algos}

\subsection{Notation and formal problem set-up}
 Our assumption (well borne by the fundamental science use cases that we target) is that calibration data encode {\em the same} physical process 
as target data. Hence, we also assume that the likelihood function $p(\x|\theta)$ with $\theta \in \Theta$ and $\x \in \mathcal{X}$, which describes the data-generating process, is the same for calibration and target data. We refer to the label distribution $\pi(\theta)$ on the train data as our {\em prior distribution}.  The {\em reference distribution} $r(\theta)$ on the calibration set is a distribution that dominates the prior distribution, $r \gg \pi$. The prior $\pi(\theta)$ can be different from the label distribution $p_{\rm target}(\theta)$ of the target data, as well as different from the reference distribution $r(\theta)$ of the calibration set. Morever, the train data distribution $\widehat{p} (\x|\theta)$ can be different from $p(\x|\theta)$. See Section 3.1 for our experimental set-up.\\
\\
Now let 
 $p(\x)  := \int \widehat{p}(\x|\theta) \pi(\theta) d\theta$ be the marginal probability density function of $\X$ on train data. 
Our {\em posterior distribution} on the train data is then defined as  
  $\widehat \pi(\theta|\x):= { \widehat p(\x|\theta) \pi(\theta) }/{p(\x)}$; that is, the posterior is the conditional density of $\theta$ given $\x$ on train data.\\


\begin{Definition}[Confidence procedure] \label{def:conf_procedure} 
Let $\mathcal{A}$ denote the space of all measurable sets,
$\mathcal{A} \subseteq  \mathcal{X}  \times \Theta$. A {\em confidence procedure} is a set $\C$ in the space $\mathcal{A}$ defined as
$$ \{(\x, \theta) : (\x, \theta) \in \C \}.$$   For fixed $\x$, we define the confidence set or $\theta$-section as 
$$C(\x)=\{\theta: (\x, \theta) \in \C\}.$$ For fixed $\theta$, we define the acceptance region or $\x$-section as  
$$C_\theta=\{\x: (\x, \theta) \in \C\}.$$ 
A $(1-\alpha)$ confidence procedure is {\em valid} with respect to a distribution $p(\x|\theta)$ if, for every 
$\theta \in \Theta$ and every miscoverage level $0 \leq \alpha \leq 1$,
\begin{equation} 
\P_{\X|\theta}\left(\theta \in C(\X)\right) \geq 1-\alpha ,
 \label{eq:validity}
\end{equation}
 where $\P_{\X|\theta}$ is the conditional distribution of $\X$ given $\theta$ on the target data, $p(\x|\theta)$.
\end{Definition}



\subsection{From posteriors to confidence procedures
}
~\label{sec:confidence_procedures}

\noindent Let $ \widehat{\pi}(\theta|\X)$ be a posterior approximation based on the train data 
$$ \mathcal{T}_{\rm train} = \{ (\theta_1, \X_1) \, \ldots  (\theta_B, \X_B)\} \sim \pi(\theta) \widehat p(\x|\theta).$$
Once we have $ \widehat{\pi}(\theta|\X)$, it is straightforward to construct Bayesian credible regions for fixed $\x$ by computing highest posterior density (HPD) level sets  
\begin{align}
    \label{eq:hpd_def}
H_c(\x):=\left\{\theta:  \widehat{\pi}(\theta|\x)> c \right\},
\end{align}
where $ \int_{H_c(\x)} \widehat{\pi}(\theta|\x) d\theta = 1-\alpha$. These HPD sets however do not result in a valid confidence procedure (according to Definition~\ref{def:conf_procedure})  for train {\em or} target data. Moreover, even if the train and target distributions are exactly the same (with the same prior $\pi(\theta)$ and the same likelihood $p(\x|\theta)$), the HPD sets will only guarantee average or marginal validity. By construction,
\begin{align*} 
 \int_\Theta \P_{\X|\theta}\left(\theta \in H(\X)\right) \pi(\theta) d\theta 
 &= \int_\Theta \left(\int_{H_\theta} p(\x|\theta) d\x \right)\pi(\theta) d\theta \\
 &= \int_{\mathcal{X}} \left(\int_{H_c(\x)} \pi(\theta|\x) d\theta \right) p(\x) d\x  \\
 &\approx \int_{\mathcal{X}} \left(\int_{H_c(\x)} \widehat{\pi}(\theta|\x) d\theta \right) p(\x) d\x = 1-\alpha,
 \end{align*}
 where $H_\theta$ is the $\x$-section of a HPD confidence procedure with $1-\alpha$ credible sets $H_c(\x)$ at every $\x \in \mathcal{X}$.
\\ 
\\
\noindent In this paper, we propose a new approach that constructs confidence procedures that mirror the style of HPD level sets in Bayesian inference, while providing frequentist coverage properties for every $\theta \in \Theta$, regardless of $\pi(\theta)$. We apply a monotonic transformation $g_\theta$ to the posterior, so that the level sets $B_\alpha(\x) = \left\{\theta:  h(\x;\theta) > \alpha \right\}$,
 where $h(\x;\theta):=g_\theta(\widehat{\pi}(\theta|\x))$ control the type I error at level $\alpha$ for any $\theta \in \Theta$ and $0 < \alpha < 1$. In Appendix~\ref{sec:rej_prob}, we outline the construction of one such procedure that estimates  $h(\x;\theta)$ from the calibration set 
 $$ \mathcal{T}_{\rm cal} = \{ (\theta_1', \X_1') \, \ldots  (\theta_{B'}', \X_{B'}')\} \sim r(\theta) p(\x|\theta),$$
where we assume that $r \gg \pi$. \\

\noindent In Appendix~\ref{sec:p-values}, we show how confidence procedures can be constructed for all levels of miscoverage $\alpha$ simultaneously from an estimate of $g_\theta$.  
   Our procedure can be seen as a generalization of {\em confidence distributions} \citepsupp{schweder_confidence_2002,xie_confidence_2013,nadarajah_confidence_2015,cui_confidence_2023,thornton_bridging_2024} from one-dimensional to multidimensional parameter spaces $\Theta$. However, for many practical applications, researchers are only interested in constructing valid and precise confidence procedures for a {\em fixed prespecified} miscoverage level $\alpha$. In the latter case, one can reduce the complexity of the numerical estimation problem via an $\alpha$-level quantile regression of the test statistic on $\theta$. We outline the details of the latter approach in Appendix~\ref{sec:critical_values}.\\
\\

\subsection{Rejection probability across the entire parameter space}\label{sec:rej_prob} 

At the heart of our construction is the relationship between frequentist confidence sets $C(\X)$ and acceptance regions $C_{\theta_0}$ for tests of $H_{0, \theta_0}: \theta = \theta_0$ at all $\theta_0 \in \Theta$.  Below we define the rejection probability function $W$ for an arbitrary test statistic $\lambda$ that rejects $H_{0, \theta_0}$ for small values of the test statistic $\lambda$.

\begin{Definition}[Rejection probability] 
Let $\lambda$ be any test statistic; such as the estimated posterior, $\lambda(\X;\theta_0)=\widehat{\pi}(\theta_0|\X)$. The rejection probability of the test $H_{0, \theta_0}$ is defined as 
 \begin{align}
\label{eq:power_func}
W_\lambda(t; \theta, \theta_0) := \pr_{\X | \theta} \left(  \lambda(\X;\theta_0) \leq t \right),
\end{align}
where $\theta, \theta_0 \in \Theta$ and   $t \in \mathbb{R}$, and $\P_{\X|\theta}$ is the conditional distribution of $\X$ given $\theta$ on the target data, $p(\x|\theta)$.
\end{Definition}




We can learn the rejection probability function using a monotone regression that enforces the rejection probability to be a nondecreasing function of $t$. The computation is straightforward when $\theta=\theta_0$.  In this work, we propose a fast procedure for estimating the cumulative distribution function 
\begin{equation}\label{eq:lambda_CDF}
    F_\lambda(t; \theta_0):= W_\lambda(t; \theta_0, \theta_0) = \pr_{\X | \theta_0} \left(  \lambda(\X;\theta_0) \leq t \right)
\end{equation}
of the test statistic $\lambda$ as a function of the cut-off $t$ and the parameter value $\theta_0 \in \Theta$.
For each point $i$ ($i=1,\ldots, B'$) in the calibration set $ \mathcal{T}_{\rm cal} = \{ (\theta_1', \X_1') \, \ldots  (\theta_{B'}', \X_{B'}')\} \sim r(\theta) p(\x|\theta)$,
 we draw a sample of cutoffs $K$ according to the empirical distribution of the test statistic $\lambda$. Then, we regress the indicator variable 
\begin{align}
Y_{i,j}:= \I \left( \lambda(\X_i';\theta_i') \leq t_j \right )
\end{align}
on $\theta_i'$ and $t_{i,j}$ ($=t_j$) using  the ``augmented'' calibration sample  $\widetilde{\mathcal{T}}_{\rm cal}=\{(\theta_i',t_{i,j},Y_{i,j})\}_{i,j}$, for $i=1,\ldots,B'$ and $j=1,\ldots,K$, where $K$ is our augmentation factor. See Algorithm \ref{algo:rejection_prob0} for more details.

\begin{algorithm} 
    \caption{
    \texttt{Learning the rejection probability function}}\label{algo:rejection_prob0}
    \begin{flushleft}
    \textbf{Input:} {\small test statistic $\lambda$; calibration data  $\mathcal{T}_{\rm cal}=\{(\theta'_1, \X'_1),\ldots,(\theta'_{B'}, \X'_{B'})\}$; oversampling factor K; 
    evaluation points $\mathcal{V} \subset \Theta$}\\
    \textbf{Output:} {\small Estimate of rejection probability $F_\lambda(t;\theta)$ when $\theta=\theta_0$, for all $t \in G$ and $\theta \in \mathcal{V}$ }
    \end{flushleft}		
    \begin{algorithmic}[1]
        \State \codecomment{Learn rejection probability from augmented calibration data  $\widetilde{\mathcal{T}}'$}
        \State Set $\widetilde{\mathcal{T}}_{\rm cal} \gets \emptyset$
        \State Let $G_0 \gets \{\lambda(\X'_1;\theta_1'), \ldots,\lambda(\X'_{B'};\theta'_{B'})\}$
        \For{$i$ in $\{1,...,B'\}$}	
                \State Let $G \gets \text{sample of size K from } G_0$\text{ with replacement}
            \For{$j$ in $\{1,...,K\}$}  
                \State Let $t_j \gets G[j]$
                \State Compute $Y_{i,j} \gets \I \left(  \lambda(\X'_i;\theta_i') \leq t_{j} \right)$
                \State Let $ \widetilde{\mathcal{T}}_{\rm cal} \gets  \widetilde{\mathcal{T}}_{\rm cal}  \cup \{\left(\theta'_i,t_{j}, Y_{i,j} \right)\}$ 
           \EndFor
        \EndFor  
        \State Estimate  $ F_\lambda(t;\theta) := \pr_{\X| \theta} \left(   \lambda(\X;\theta) \leq t \right)$ from $\widetilde{\mathcal{T}}_{\rm cal}$ via a regression of $Y$ on $\theta$ and $t$, which is monotonic in $t$. 
	\State \textbf{return}  estimated rejection probabilities $\widehat{F}_\lambda(t;\theta)$, for $t \in G$, $\theta \in \mathcal{V}$ 	
    \end{algorithmic}
\end{algorithm}

\subsection{Amortized p-values for constructing confidence procedures} 
\label{sec:p-values}

For any test statistic $\lambda$ and null hypothesis $H_{0,\theta_0}:\theta=\theta_0$, we can define a new test statistic $h$ via a monotonic transformation,
\begin{align}
    h(\X; \theta_0) &:= F_\lambda( \lambda(\X; \theta_0) ; \theta_0), \nonumber \\
    & = \P_{\x | \theta_0}\left( \lambda(\x; \theta_0) <   \lambda (\X ; \theta_0) \right) ,\label{eq:p-value}
 \end{align} 
and then a corresponding family of confidence sets of $\theta$ by taking level sets,
$$	B_\alpha(\X) = \left\{ \theta_0 \in \Theta \mid   h(\X; \theta_0)> \alpha \right\} , $$
 where $0 \leq \alpha \leq 1$. The following theorem shows that $F_\lambda$ (Equation~\eqref{eq:lambda_CDF}) is the only monotonic transformation that controls type I errors; that is, makes $h(\X;\theta_0)$ a {\em valid} p-value with level sets $B_\alpha(\X)$ that are confidence sets with frequentist level-$\alpha$ coverage.

\begin{thm}\label{th:pvalue_uniqueness}
Let $\lambda(\x;\theta)$ be any test statistic. 
For every fixed $\theta \in \Theta$, let $g_\theta: \mathbb{R} \longrightarrow \mathbb{R}$ be a monotonic transformation of $\lambda(\x;\theta)$.
Then 
$$\P_{\X|\theta}\left(g_\theta(\lambda(\X;\theta))>\alpha \right)=1-\alpha \text{ for every $\alpha \in (0,1)$ and $\theta \in \Theta$}$$
if, and only if,
$g_\theta(\lambda(\x;\theta))= F_\lambda( \lambda(\x; \theta) ; \theta)$.
\end{thm}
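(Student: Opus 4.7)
My plan is to reduce both directions of this equivalence to the classical probability integral transform (PIT), exploiting the fact that $F_\lambda(\,\cdot\,;\theta)$ is, by definition (Equation~\eqref{eq:lambda_CDF}), the CDF of the random variable $T_\theta := \lambda(\X;\theta)$ under $\X \sim p(\x|\theta)$. I will work under the natural assumption (implicit in the continuous posterior setup of the paper) that $F_\lambda(\,\cdot\,;\theta)$ is continuous and strictly increasing on the support of $T_\theta$, so that $F_\lambda^{-1}(\,\cdot\,;\theta)$ is well-defined on $(0,1)$.

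For the sufficiency direction ($\Leftarrow$), I would apply the PIT directly: if $g_\theta(\lambda(\x;\theta)) = F_\lambda(\lambda(\x;\theta);\theta)$, then $g_\theta(T_\theta) = F_\lambda(T_\theta;\theta) \sim \mathrm{Uniform}(0,1)$, which immediately gives $\P_{\X|\theta}(g_\theta(\lambda(\X;\theta)) > \alpha) = 1-\alpha$ for every $\alpha \in (0,1)$ and every $\theta \in \Theta$. For the necessity direction ($\Rightarrow$), I would fix $\theta$ and observe that the assumption $\P_{\X|\theta}(g_\theta(T_\theta) > \alpha) = 1-\alpha$ for all $\alpha \in (0,1)$ is exactly the statement that the random variable $U_\theta := g_\theta(T_\theta)$ has a $\mathrm{Uniform}(0,1)$ distribution. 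Using that $g_\theta$ is monotone (and, consistent with the p-value convention in Equation~\eqref{eq:p-value}, nondecreasing), I can write for each $u \in (0,1)$
$$u \;=\; \P_{\X|\theta}(U_\theta \leq u) \;=\; \P_{\X|\theta}(T_\theta \leq g_\theta^{-1}(u)) \;=\; F_\lambda(g_\theta^{-1}(u);\theta).$$
Applying $F_\lambda^{-1}(\,\cdot\,;\theta)$ to both sides yields $g_\theta^{-1}(u) = F_\lambda^{-1}(u;\theta)$, and inverting once more shows $g_\theta(t) = F_\lambda(t;\theta)$ at every $t$ in the support of $T_\theta$, which is the claim.

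The main obstacle I anticipate is handling edge cases rather than the core PIT argument. If $F_\lambda(\,\cdot\,;\theta)$ has atoms, the PIT no longer yields exact uniformity (only stochastic domination), and the uniqueness step needs generalized inverses together with the qualifier ``almost surely on the support of $T_\theta$''; practically, one either invokes continuity of the test statistic's distribution or appeals to a randomized p-value construction. A second, more conceptual subtlety is that a strictly \emph{decreasing} $g_\theta$ satisfying the uniformity assumption would force $g_\theta = 1 - F_\lambda$ instead; so pinning down the direction of monotonicity (consistent with ``reject for small $\lambda$'' and hence ``small $g_\theta$ = small p-value'' from Equation~\eqref{eq:p-value}) is necessary to obtain the stated uniqueness of $g_\theta = F_\lambda$.
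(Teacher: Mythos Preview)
Your proposal is correct and follows essentially the same route as the paper: both directions rest on the chain $\{g_\theta(T_\theta)>\alpha\} \Leftrightarrow \{T_\theta > g_\theta^{-1}(\alpha)\}$, identify $\P_{\X|\theta}(T_\theta \leq g_\theta^{-1}(\alpha)) = F_\lambda(g_\theta^{-1}(\alpha);\theta)$, and then invert to conclude $g_\theta^{-1}=F_\lambda^{-1}$; your PIT framing is simply a conceptual repackaging of the paper's explicit string of equivalences. Your added remarks on atoms and on the increasing-versus-decreasing ambiguity are valid caveats that the paper's proof tacitly assumes away.
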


\begin{proof}\ \ 

$\Rightarrow$  direction: Fix $\theta$ and let $g_\theta$ be any monotonic transformation for $\lambda$ as stated in the theorem. Then
\begin{align*}
    \P_{\X|\theta}&\left(g_\theta(\lambda(\X;\theta))>\alpha \right)=1-\alpha, \ \forall \alpha \in (0,1) \\
    \iff  &\P_{\X|\theta}\left(\lambda(\X;\theta)>g^{-1}_\theta(\alpha) \right)=1-\alpha,  \ \forall \alpha \in (0,1)  \\
     \iff  &\P_{\X|\theta}\left(\lambda(\X;\theta)\leq g^{-1}_\theta(\alpha) \right)=\alpha,    \ \forall \alpha \in (0,1)  \\
         \iff  &F_{\lambda}(g^{-1}_\theta(\alpha) ; \theta)   =\alpha,  \ \forall \alpha \in (0,1) \\
         \iff &g^{-1}_\theta(\alpha)=F^{-1}_{\lambda}(\alpha;\theta),  \ \forall \alpha \in (0,1)  \\
         \iff &g_\theta(\lambda(\x;\theta))= F_\lambda( \lambda(\x; \theta) ; \theta),  \ \forall \x \in \mathcal{X} .
\end{align*}
$\Leftarrow$ direction: Let $g_\theta(\lambda(\x;\theta))= F_\lambda( \lambda(\x; \theta) ; \theta)$. Notice that
\begin{align*}
    \P_{\X|\theta}\left(g_\theta(\lambda(\X;\theta))>\alpha \right)&= \P_{\X|\theta}\left(F_\lambda( \lambda(\X; \theta) ; \theta)>\alpha \right)\\
    &= \P_{\X|\theta}\left( \lambda(\X; \theta)  > F^{-1}_\lambda( \alpha;\theta )\right)  \\
    &= 1-\P_{\X|\theta}\left( \lambda(\X; \theta)  \leq F^{-1}_\lambda( \alpha;\theta )\right)  \\   &= 1-F_{\lambda}(F^{-1}_\lambda( \alpha;\theta); \theta)   \\
    &=
    1-\alpha.
\end{align*}
\end{proof}

\paragraph{Confidence procedures at all levels $\alpha$ simultaneously.}  To summarize: Algorithm~\ref{algo:rejection_prob0} offers a means to computing p-values    $\widehat{h}(\x; \theta_0) := \widehat F_\lambda( \lambda(\x; \theta_0) ; \theta_0)$ and the entire family of confidence sets $\widehat{B}_\alpha(\x) := \left\{ \theta \in \Theta \mid   \widehat{h}(\x; \theta_0)> \alpha \right\} $, which is fully amortized with respect to observed data $\x \in \mathcal{X}$, the parameter $\theta_0\in \Theta$, and the miscoverage level $0 \leq \alpha \leq 1$. That is,  once we have the test statistic $\lambda(\x;\theta_0)$ and the rejection probability  $\widehat F( t ; \theta_0)$ as a function of all $t \in \mathbb{R}$ and  $\theta_0 \in \Theta$ (via Algorithm~\ref{algo:rejection_prob0}), we can perform inference for new data without retraining for all miscoverage levels $\alpha$ simultaneously.\\ 


\subsection{Alternative construction of confidence procedures at a fixed prespecified level} \label{sec:critical_values}
For many practical applications, researchers are only interested in constructing valid and precise confidence procedures with 
\begin{align}\label{eq:confidence_set}
    \widehat{B}_\alpha(\x) &:= \left\{ \theta \in \Theta \mid  \widehat F_\lambda\left( \lambda(\x; \theta);\theta\right) > \alpha \right\} \nonumber\\
    &= \left\{ \theta \in \Theta \mid   
     \lambda(\x; \theta) > \widehat F^{-1}_\lambda (\alpha;\theta)
    \right\}
\end{align}
for some pre-specified miscoverage level $\alpha \in (0,1)$. In such cases, we only need to estimate the critical values 
$t_{\theta_0} := F^{-1}_\lambda (\alpha;\theta_0)$ for a fixed level-$\alpha$ test of $H_0: \theta=\theta_0, \ \forall \theta_0 \in \Theta$. Algorithm \ref{alg:estimate_cutoffs} outlines an amortized approach that estimates the critical values across the parameter space; this algorithm was first proposed by \citesupp{dalmasso_confidence_2020} for approximate likelihood approaches.

\begin{algorithm} 
    \small
    \caption{Estimate critical values $t_{\theta_0}$  for a level-$\alpha$ test of  $H_{0, \theta_0}: \theta = \theta_0$ vs. $H_{1, \theta_0}: \theta \neq \theta_0$ for all $\theta_0 \in \Theta$ simultaneously}
    \begin{flushleft}
    {\bf Input:} 
     test statistic $\lambda$; calibration data $\mathcal{T}_{\rm cal}=\{(\theta'_1, \X'_1),\ldots,(\theta'_{B'}, \X'_{B'})\}$; quantile regression estimator; level $\alpha \in (0,1)$\\
    {\bf Output:} estimated critical values $\widehat{t}_{\theta_0}$ for all $\theta_0 \in \Theta$
    \end{flushleft}
    \label{alg:estimate_cutoffs} 
    \begin{algorithmic}[1]
    \State Set $\widetilde{\T}_{\rm cal} \gets \emptyset$ 
    \For{i in $\{1,\ldots, B'\}$}
    \State Compute test statistic $\lambda'_i \gets \lambda(\X'_{i};\theta'_i) $
    \State $\widetilde{\T}_{\rm cal} \gets \widetilde{\T}_{\rm cal} \cup  \{(\theta'_i,\lambda'_{i})\}$  
    \EndFor
    \State  Use $\widetilde{\T}_{\rm cal}$ to learn 
     the conditional quantile function $\widehat{t}_\theta := \widehat{F}_{\lambda|\theta}^{-1}(\alpha | \theta)$
    via quantile regression of  $\lambda$ on $\theta$ \\
	\textbf{return}
	 $\widehat{t}_{\theta_0}$
	\end{algorithmic}
\end{algorithm}



\subsection{Validity of Frequentist-Bayes procedure}
\label{sec:validity}

\subsubsection{P-value estimation}


%

The method of estimating the p-value  described in Appendix \ref{sec:p-values} is consistent. Below we adapt the general LF2I results in \citecsupp{Sec  4.2}{dalmasso_likelihood-free_2024} which hold in general, even for fully amortized procedures (Algorithm~\ref{algo:rejection_prob0}). The proofs are equivalent.

\begin{Assumption}[Uniform consistency]
\label{assump:uniform_consistency}
The regression estimator used in Algorithm \ref{algo:rejection_prob0}  is such that
$$\sup_{\theta,t} |\hat \E_{B'}[Y|\theta,t]- \E[Y|\theta,t]|  \xrightarrow[B' \longrightarrow\infty]{\enskip \text{a.s.} \enskip}  0.$$
\end{Assumption}

\noindent
If $\Theta$ is continuous and the Lebesgue measure dominates $r$, then the estimators described, e.g., in \citepsupp{bierens_uniform_1983,hardle_uniform_1984,liero_strong_1989,girard_uniform_2014} satisfy this assumption.


\begin{thm}
\label{thm:pval_right_coverage}
Fix $\theta_0 \in \Theta$. Under Assumption \ref{assump:uniform_consistency}  and if $h(\X;\theta_0)$ is  an absolutely continuous random variable then, for every $\theta \in \Theta$,
$$\widehat h(\X;\theta_0) \xrightarrow[B' \longrightarrow\infty]{\enskip \text{a.s.} \enskip} h(\X;\theta_0)$$
and 
$$\P_{\X,\mathcal{T}'|\theta}\left(\hat h\left(\X;\theta_0\right)\leq \alpha\right) \xrightarrow{B' \longrightarrow\infty} \P_{\X|\theta}(h (\X;\theta_0)\leq \alpha).$$
In particular,
$$\P_{\X,\mathcal{T}'|\theta_0}\left(\hat h\left(\X;\theta_0\right)\leq \alpha\right) \xrightarrow{B' \longrightarrow\infty}  \alpha.$$
\end{thm}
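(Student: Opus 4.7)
The plan is to chain three observations: uniform consistency of the regression in Algorithm~\ref{algo:rejection_prob0} yields uniform consistency of $\widehat F_\lambda$ as an estimator of $F_\lambda$; this propagates to almost-sure convergence of $\widehat h(\X;\theta_0)$ to $h(\X;\theta_0)$; and the absolute-continuity assumption on $h(\X;\theta_0)$ then upgrades almost-sure convergence to pointwise convergence of the relevant CDF.

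First I would identify the regression target. In Algorithm~\ref{algo:rejection_prob0}, $Y_{i,j} = \I(\lambda(\X_i';\theta_i') \leq t_j)$ with $(\theta_i',\X_i') \sim r(\theta)p(\x|\theta)$, so the conditional expectation being estimated is
\begin{equation*}
\E[Y\mid\theta,t] \;=\; \P_{\X|\theta}\bigl(\lambda(\X;\theta)\leq t\bigr) \;=\; F_\lambda(t;\theta),
\end{equation*}
independent of the marginal $r(\theta)$. Assumption~\ref{assump:uniform_consistency} therefore gives $\sup_{\theta,t}\lvert \widehat F_\lambda(t;\theta) - F_\lambda(t;\theta)\rvert \to 0$ almost surely in the calibration draw $\mathcal{T}'$. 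Evaluating this uniform bound at $t = \lambda(\x;\theta_0)$ yields $|\widehat h(\x;\theta_0) - h(\x;\theta_0)| \to 0$ almost surely in $\mathcal{T}'$ for every fixed $\x$; a Fubini argument then extends this to joint a.s.\ convergence under $\P_{\X,\mathcal{T}'|\theta}$, proving the first claim of the theorem.

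For the second claim, joint a.s.\ convergence implies convergence in distribution of $\widehat h(\X;\theta_0)$ (under $\P_{\X,\mathcal{T}'|\theta}$) to $h(\X;\theta_0)$ (under $\P_{\X|\theta}$). Since $h(\X;\theta_0)$ is absolutely continuous, its distribution function is continuous on $\mathbb{R}$, so the portmanteau theorem delivers
\begin{equation*}
\P_{\X,\mathcal{T}'|\theta}\bigl(\widehat h(\X;\theta_0)\leq\alpha\bigr) \;\longrightarrow\; \P_{\X|\theta}\bigl(h(\X;\theta_0)\leq\alpha\bigr)
\end{equation*}
at every $\alpha \in (0,1)$. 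Specializing to $\theta = \theta_0$, the probability integral transform applied to the absolutely continuous random variable $\lambda(\X;\theta_0)$ gives $h(\X;\theta_0) = F_\lambda(\lambda(\X;\theta_0);\theta_0) \sim \mathrm{Uniform}(0,1)$, so the right-hand side equals $\alpha$ and the ``in particular'' statement follows.

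The main obstacle I anticipate is the first step: verifying that Assumption~\ref{assump:uniform_consistency} genuinely applies to the augmented sample $\widetilde{\mathcal{T}}_{\rm cal}$ built in Algorithm~\ref{algo:rejection_prob0}. The triples $(\theta_i',t_j,Y_{i,j})$ are not i.i.d., because within each block $i$ the $K$ cutoffs $t_j$ are drawn from the empirical distribution of $\{\lambda(\X_k';\theta_k')\}_{k=1}^{B'}$ and hence depend on the entire calibration set. One must argue that, conditionally on the evaluation grid, the regression still targets $F_\lambda(t;\theta)$ and that the induced within-block dependence does not break the uniform law of large numbers; invoking results such as \citesupp{bierens_uniform_1983,hardle_uniform_1984,liero_strong_1989,girard_uniform_2014} under suitable dependence or triangular-array extensions is plausible but requires care. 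A secondary subtlety is the measurability of the joint-a.s.\ statement when applying Fubini to the exceptional null set, which is standard but worth flagging explicitly.
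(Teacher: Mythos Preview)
The paper does not supply its own proof of this theorem: it simply states that the result is adapted from \citecsupp{Sec~4.2}{dalmasso_likelihood-free_2024} and that ``the proofs are equivalent.'' Your argument---identify the regression target as $F_\lambda(t;\theta)$, transfer the uniform-consistency assumption to $\widehat F_\lambda$, evaluate at $t=\lambda(\x;\theta_0)$ to obtain a.s.\ convergence of $\widehat h$, then invoke convergence in distribution together with the continuity of the limiting CDF and the probability integral transform---is the standard route and almost certainly coincides with the deferred proof.

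Two minor remarks. First, the Fubini step is unnecessary: because Assumption~\ref{assump:uniform_consistency} gives a \emph{single} $\mathcal{T}'$-null set outside of which the supremum over all $(\theta,t)$ vanishes, the convergence $\widehat h(\x;\theta_0)\to h(\x;\theta_0)$ holds for every $\x$ simultaneously on that event, and joint a.s.\ convergence is immediate. Second, your closing concern about the non-i.i.d.\ structure of the augmented sample $\widetilde{\mathcal T}_{\rm cal}$ is well taken, but note that the paper resolves it by fiat: Assumption~\ref{assump:uniform_consistency} is imposed directly on ``the regression estimator used in Algorithm~\ref{algo:rejection_prob0},'' so the dependence issue is absorbed into the hypothesis rather than argued away. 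Your caveat is a legitimate observation about the strength of that assumption, not a gap in the proof of the theorem as stated.
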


\begin{Assumption}[Convergence rate of the regression estimator]
\label{assump:conv_reg_pval}
The regression estimator is such that
$$\sup_{\theta,t} |\hat \E[Z|\theta,t]- \E[Z|\theta,t]|=O_P\left(\left(\frac{1}{B'}\right)^{r}\right).$$
for some $r>0$.
\end{Assumption}

\noindent
Examples of regression estimators that satisfy Assumption \ref{assump:conv_reg_pval} 
when $\Theta$ is continuous and the Lebesgue measure dominates $r$ can be found in \citesupp{stone_optimal_1982,hardle_uniform_1984,donoho_asymptotic_1994,yang_frequentist_2017}.

\begin{thm}
\label{thm:pval_rate}
Under Assumption \ref{assump:conv_reg_pval}, 
$$|\hat h(\X;\theta_0)- h(\X;\theta_0)|=O_P\left(\left(\frac{1}{B'}\right)^{r}\right).$$
\end{thm}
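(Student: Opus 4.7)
The plan is to recognize this statement as an almost immediate corollary of the uniform convergence rate in Assumption~\ref{assump:conv_reg_pval}, once we identify $h$ and $\hat h$ as values of the true regression function and its estimator evaluated at a single (random) point. Concretely, by construction in Algorithm~\ref{algo:rejection_prob0} the regression target is $Y = \I(\lambda(\X';\theta') \leq t)$, so $\E[Y \mid \theta, t] = \P_{\X \mid \theta}(\lambda(\X;\theta) \leq t) = F_\lambda(t;\theta)$. Writing $t^\star := \lambda(\X;\theta_0)$ for the random cutoff that arises from the test observation, the two quantities we wish to compare then take the form
\begin{equation*}
h(\X;\theta_0) = \E[Y \mid \theta_0, t^\star], \qquad \hat h(\X;\theta_0) = \hat\E[Y \mid \theta_0, t^\star].
\end{equation*}

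The second and final step is the trivial pointwise-to-uniform inequality: for every realisation of $(\X, \mathcal{T}')$,
\begin{equation*}
\bigl|\hat h(\X;\theta_0) - h(\X;\theta_0)\bigr|
= \bigl|\hat\E[Y \mid \theta_0, t^\star] - \E[Y \mid \theta_0, t^\star]\bigr|
\leq \sup_{\theta, t} \bigl|\hat\E[Y \mid \theta, t] - \E[Y \mid \theta, t]\bigr|.
\end{equation*}
Invoking Assumption~\ref{assump:conv_reg_pval} on the right-hand side yields the announced $O_P((1/B')^r)$ bound.

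The only subtle point, which I would make explicit in the write-up, is that $t^\star$ is itself random through $\X$, so a merely pointwise convergence rate for the regression estimator would not suffice; the uniform supremum in Assumption~\ref{assump:conv_reg_pval} is exactly what absorbs this additional randomness, because the displayed inequality holds for every realisation of $\X$ simultaneously. Beyond that observation, there is no real obstacle: the theorem is in effect a corollary that translates the regression-estimator rate into a rate for the estimated p-value evaluated at any (possibly data-dependent) target point, mirroring how Theorem~\ref{thm:pval_right_coverage} translated uniform consistency into consistency of $\hat h$.
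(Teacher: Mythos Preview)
Your proposal is correct and follows essentially the same route as the paper: the paper's proof simply notes that $\widehat h(\x;\theta_0)=\widehat F_\lambda(\lambda(\x;\theta_0);\theta_0)=\widehat\E[Z\mid\theta_0,\lambda(\x;\theta_0)]$ and invokes Assumption~\ref{assump:conv_reg_pval} directly. Your write-up is in fact more explicit than the paper's one-line argument, spelling out the pointwise-to-uniform bound and the reason the uniform rate is needed to handle the random evaluation point $t^\star=\lambda(\X;\theta_0)$.
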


\begin{proof}[Proof of Theorem \ref{thm:pval_rate}]
The result follows directly from Assumption \ref{assump:conv_reg_pval}
and the fact that
$\widehat{h}(\x; \theta_0) := \widehat F_\lambda( \lambda(\x; \theta_0) ; \theta_0)=\widehat \E \left[Z|\theta_0,\lambda(\x; \theta_0) \right].$
\end{proof}

\subsubsection{Critical value estimation}\label{eq:validity_criticalvalues}


Our procedure for choosing critical values leads to
valid hypothesis tests (that is, tests that control the type I error probability), as long as the number of simulations $B'$ in Algorithm \ref{alg:estimate_cutoffs} is sufficiently large. See \citecsupp{Sec 4.1}{dalmasso_likelihood-free_2024} and Appendix \ref{sec:diagnostics} for details.

 \begin{Assumption}[Uniform consistency]
\label{assum:quantile_consistent_simple_null} 
Let $\hat F_{B'} (\lambda;\theta)$
be the estimated cumulative distribution function of the test statistics $\lambda$ indexed by $\theta$, implied by Algorithm \ref{alg:estimate_cutoffs}.
Assume that the quantile regression estimator is such that
$$\sup_{\lambda \in \mathbb{R}}|\hat F_{B'}(\lambda;\theta_0)-  F(\lambda;\theta_0)|\xrightarrow[B' \longrightarrow\infty]{\enskip P \enskip} 0.$$
\end{Assumption}

\noindent
Assumption~\ref{assum:quantile_consistent_simple_null} holds, for instance, for quantile regression forests \citepsupp{meinshausen_quantile_2006}.

\noindent
Next, we show that Algorithm \ref{alg:estimate_cutoffs} yields a valid hypothesis test as $B' \rightarrow \infty$.

\begin{thm}
 \label{thm:valid_tests}
Let 
$C_{B'}=\hat F_{B'} (\alpha;\theta_0)$. If the quantile
estimator satisfies Assumption~\ref{assum:quantile_consistent_simple_null},
then, for every $\theta_0 \in \Theta$,
$$ \P_{\X|\theta_0,C_{B'}}(\lambda(\X;\theta_0) \leq C_{B'})  \xrightarrow[B' \longrightarrow\infty]{\enskip a.s. \enskip}   \alpha,$$
where $\P_{\X|\theta_0,C_{B'}}$ denotes the probability integrated over $\X\sim p(\x|\theta_0)$ and conditional on the random variable $C_{B'}$.
\end{thm}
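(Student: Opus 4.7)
The plan is to condition on the calibration data (which determines the estimated cutoff $C_{B'}$), exploit the fact that $\X$ is drawn independently of the calibration data, and reduce everything to the uniform consistency assumption on the CDF estimator. First, I would observe that since the target observation is independent of $\T_{\rm cal}$,
\begin{equation*}
\P_{\X \mid \theta_0, C_{B'}}\bigl(\lambda(\X;\theta_0) \leq C_{B'}\bigr) \;=\; F\bigl(C_{B'};\theta_0\bigr),
\end{equation*}
where $F(\,\cdot\,;\theta_0)$ is the true CDF of $\lambda(\X;\theta_0)$ under $\X \sim p(\x \mid \theta_0)$. Thus it suffices to show that the random variable $F(C_{B'};\theta_0)$ tends to $\alpha$.

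Next, I would invoke the defining property of $C_{B'}$. By construction $C_{B'} = \hat F_{B'}(\alpha;\theta_0)$, i.e. it is the $\alpha$-quantile of the estimated conditional distribution, so (up to a discretization error discussed below) $\hat F_{B'}(C_{B'};\theta_0) = \alpha$. Adding and subtracting this quantity yields
\begin{equation*}
\bigl|F(C_{B'};\theta_0) - \alpha\bigr| \;=\; \bigl|F(C_{B'};\theta_0) - \hat F_{B'}(C_{B'};\theta_0)\bigr| \;\leq\; \sup_{\lambda \in \mathbb{R}} \bigl|F(\lambda;\theta_0) - \hat F_{B'}(\lambda;\theta_0)\bigr|.
\end{equation*}
Assumption~\ref{assum:quantile_consistent_simple_null} sends the right-hand side to zero, and hence $F(C_{B'};\theta_0) \to \alpha$, in the mode of convergence provided by the assumption. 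Combined with the conditioning argument above, this gives the claim.

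The main obstacle is the identity $\hat F_{B'}(C_{B'};\theta_0) = \alpha$: for piecewise-constant estimators such as quantile regression forests, $\hat F_{B'}$ has jumps of size $O(1/B')$ and this identity fails literally. I would handle this by defining $C_{B'}$ through the usual left-continuous quantile, so that $\hat F_{B'}(C_{B'};\theta_0) \geq \alpha$ with a gap bounded by the maximum jump size, and then absorbing this $o(1)$ discretization term into the uniform bound above. A secondary concern is that the assumption is stated as convergence in probability while the theorem's conclusion is stated as almost-sure convergence; under the assumption as written one obtains convergence in probability in the conclusion, and genuine a.s.\ convergence requires strengthening the mode of convergence in the assumption (which does hold for many standard estimators, including the quantile regression forests cited). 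Apart from these technical points, no delicate analytic argument is needed -- the proof is essentially a one-line application of uniform consistency once the conditional probability has been rewritten as $F(C_{B'};\theta_0)$.
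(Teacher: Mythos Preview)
Your argument is correct and is exactly the standard route: rewrite the conditional rejection probability as $F(C_{B'};\theta_0)$, then bound $|F(C_{B'};\theta_0)-\alpha|$ by the uniform CDF error and invoke Assumption~\ref{assum:quantile_consistent_simple_null}. The paper does not spell out its own proof of this theorem but instead points to \cite[Sec.~4.1]{dalmasso_likelihood-free_2024}, where the argument is precisely the one you give; so your proposal matches the intended approach.

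Your two caveats are well taken and worth keeping. The discretization gap $|\hat F_{B'}(C_{B'};\theta_0)-\alpha|$ is indeed $o(1)$ for the estimators under consideration and is harmlessly absorbed into the uniform bound, as you say. The mode-of-convergence mismatch you flag is a genuine inconsistency in the paper's statements: Assumption~\ref{assum:quantile_consistent_simple_null} gives convergence in probability, which yields only $\xrightarrow{P}$ in the conclusion, not the $\xrightarrow{\text{a.s.}}$ asserted in the theorem. Your diagnosis---that one needs to upgrade the assumption (which is harmless for quantile forests and related estimators)---is the right fix.
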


\noindent
If the convergence rate of the quantile regression estimator is known (Assumption \ref{assum:quantile_regression_rate}),  Theorem \ref{thm:valid_tests_rate} provides a finite-$B'$ guarantee on how far the type I error of the test will be from the nominal level.

\begin{Assumption}[Convergence rate of the quantile regression estimator]
\label{assum:quantile_regression_rate}
Using the notation of Assumption \ref{assum:quantile_consistent_simple_null}, assume that the quantile regression estimator is such that
$$\sup_{\lambda \in \mathbb{R}}|\hat F_{B'}(\lambda;\theta_0)-  F(\lambda;\theta_0)|=O_P\left(\left(\frac{1}{B'}\right)^{r}\right)$$
for some $r>0$.
\end{Assumption}

\begin{thm}
 \label{thm:valid_tests_rate}
With the notation and assumptions of Theorem \ref{thm:valid_tests}, and if  Assumption~\ref{assum:quantile_regression_rate} also holds,
then,
$$ |\P_{\X|\theta_0,C_{B'}}(\lambda(\X;\theta_0) \leq C_{B'}) - \alpha| =O_P\left(\left(\frac{1}{B'}\right)^{r}\right).$$
\end{thm}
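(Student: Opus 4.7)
The plan is to reduce the theorem to a direct application of Assumption~\ref{assum:quantile_regression_rate} by rewriting the difference $|\P_{\X|\theta_0,C_{B'}}(\lambda(\X;\theta_0) \leq C_{B'}) - \alpha|$ as the discrepancy between the true and estimated CDFs evaluated at the random threshold $C_{B'}$. Concretely, by definition of $\P_{\X|\theta_0,C_{B'}}$ and the fact that $C_{B'}$ is treated as fixed under this conditional probability,
\begin{equation*}
\P_{\X|\theta_0,C_{B'}}(\lambda(\X;\theta_0) \leq C_{B'}) = F(C_{B'};\theta_0),
\end{equation*}
where $F(\cdot;\theta_0)$ is the true CDF of $\lambda(\X;\theta_0)$ under $\X \sim p(\x|\theta_0)$.

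Next, I would invoke the definition of $C_{B'}$ as the estimated $\alpha$-quantile (i.e., $C_{B'} = \hat F_{B'}^{-1}(\alpha;\theta_0)$ in the notation of Algorithm~\ref{alg:estimate_cutoffs}), so that $\hat F_{B'}(C_{B'};\theta_0) = \alpha$ whenever $\hat F_{B'}(\cdot;\theta_0)$ is continuous. Substituting $\alpha = \hat F_{B'}(C_{B'};\theta_0)$ into the target expression yields
\begin{equation*}
|F(C_{B'};\theta_0) - \alpha| = |F(C_{B'};\theta_0) - \hat F_{B'}(C_{B'};\theta_0)| \leq \sup_{\lambda \in \mathbb{R}} |F(\lambda;\theta_0) - \hat F_{B'}(\lambda;\theta_0)|.
\end{equation*}
Assumption~\ref{assum:quantile_regression_rate} then immediately gives the desired $O_P((1/B')^r)$ rate.

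The routine part of the argument is the uniform-sup upper bound combined with the assumed convergence rate. The one subtle step I would expect to be the main obstacle is the identity $\hat F_{B'}(C_{B'};\theta_0) = \alpha$: if the estimated CDF has jumps or is only approximately inverted by the quantile regression procedure, this equality can fail and one must instead use the generalized-inverse inequalities $\hat F_{B'}(C_{B'}-;\theta_0) \leq \alpha \leq \hat F_{B'}(C_{B'};\theta_0)$, incurring at most an additional jump-size error. Under the uniform convergence at rate $O_P((1/B')^r)$ guaranteed by Assumption~\ref{assum:quantile_regression_rate}, any such jump in $\hat F_{B'}$ is itself of that order, so absorbing it into the final bound preserves the stated rate. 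One should also verify that the near-equality $\hat F_{B'}(C_{B'};\theta_0) \approx \alpha$ is compatible with the conditioning on $C_{B'}$; this is immediate since $C_{B'}$ is measurable with respect to the calibration sample and the inequality holds pointwise in that sample.
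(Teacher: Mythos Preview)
Your proposal is correct and follows exactly the standard argument one expects here. The paper itself does not spell out a proof of this theorem in the text---it defers the details to \cite{dalmasso_likelihood-free_2024}---but the route you take (rewrite the coverage probability as $F(C_{B'};\theta_0)$, substitute $\alpha=\hat F_{B'}(C_{B'};\theta_0)$, and bound by the uniform sup-norm from Assumption~\ref{assum:quantile_regression_rate}) is precisely the intended one, and it mirrors the one-line proof the paper does give for the analogous p-value result (Theorem~\ref{thm:pval_rate}).

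One small sharpening: your claim that ``any such jump in $\hat F_{B'}$ is itself of that order'' under Assumption~\ref{assum:quantile_regression_rate} implicitly uses continuity of the \emph{true} $F(\cdot;\theta_0)$, since for a jump point $c$ one has $\hat F_{B'}(c;\theta_0)-\hat F_{B'}(c-;\theta_0)\leq 2\sup_\lambda|\hat F_{B'}(\lambda;\theta_0)-F(\lambda;\theta_0)|$ only when $F(c;\theta_0)=F(c-;\theta_0)$. That continuity is already needed for Theorem~\ref{thm:valid_tests} to converge to exactly $\alpha$, so it is a harmless standing assumption here; you may wish to state it explicitly.
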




\subsection{Power of Frequentist-Bayes procedure}
\label{sec:power}

Consider a confidence procedure $\B \in \Theta \times \mathcal{X}$ with $\theta$-sections at fixed   $\x \in \mathcal{X}$ and  $\alpha \in (0,1)$ defined by
\begin{align}
\label{eq:BF_set}
B_\alpha(\x) = \left\{ \theta \in \Theta \mid   h(\x; \theta)> \alpha \right\},
\end{align}
 where  $h(\x; \theta)$ is the p-value (Equation~\ref{eq:p-value}) for the test statistic $\lambda(\x;\theta)=\pi(\theta|\x)$. In Appendix~\ref{sec:validity}, we show that $\B$ is a valid confidence procedure on both calibration and target data, regardless of the choice of prior $\pi(\theta)$, satisfying $\P_{\X|\theta}(\theta \in B_\alpha(\X)) = 1-\alpha, \ \forall \theta \in \Theta$. In this section, we show that 
  if $\widehat p(\x|\theta)=p(\x|\theta)$ (that is, if the training set has the same likelihood function as the target set, then $B_\alpha(\x)$ has a small expected size $$ \E \left(|B_\alpha(\X)|\right) := \int_{\mathcal{X}} \left(\int_{B_\alpha(\x)}  d\theta\right) p(\x) d\x$$  with respect to the marginal distribution $p(\x) = \int  p(\x|\theta) \pi(\theta) d\theta$.  Different versions of this theorem have appeared in e.g.
\citesupp{pratt_length_1961,yu_adaptive_2018,hoff_bayes-optimal_2023} for continuous $\Theta$, as well as \citesupp{sadinle_least_2019} when $\Theta$ is finite.
\\
\\
\noindent It follows directly that if the training set has the same likelihood $p(\x|\theta)$ as the target data, {\em and} the design prior $\pi$ is ``well-specified'' and places a high mass around the true parameter value $\theta$ for the target data according to $\pi(\theta)=p_{\rm target}(\theta)$, then the frequentist Bayes sets $B_\alpha(\x)$ will not only achieve nominal coverage across the parameter space $\Theta$; they will also on average be smaller than any other valid confidence sets with respect to the marginal distribution
 $p_{\rm target}(\x)$ of the target data. However, if the prior is different from the (unknown) label distribution or ``true prior'' $p_{\rm target}(\theta)$ of the target data, then frequentist Bayes sets will not have optimal average constraining power with respect to $p_{\rm target}(\x)$. 

\begin{Lemma}[Neyman-Pearson lemma]
\label{lemma::NP} Let $\mu(\z)$ and $\nu(\z)$ be nonnegative functions in $L_1$.  Fix $\alpha \in (0,1)$, and assume that there exists $t$ such that the set $A^* =\{\z: \mu(\z)/\nu(\z)\geq t\}$ satisfies $\mu(A^*)=1-\alpha$.   Then $A^*$ is the solution to the following optimization problem:
$$\min_A \int_A \nu(\z)d\z \ \text{ subject to } \int_A \mu(\z) d\z \geq 1-\alpha.$$
\end{Lemma}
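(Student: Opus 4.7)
The plan is to prove this by the classical Neyman--Pearson swapping argument, adapted to general nonnegative $L_1$ functions rather than probability densities. Let $A$ be any competitor set with $\int_A \mu(\z)\, d\z \geq 1-\alpha$; the goal is to show that $\int_{A^*} \nu(\z)\, d\z \leq \int_A \nu(\z)\, d\z$, so that $A^*$ achieves the minimum.

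The key observation I would exploit is that the integrand
$$\big(\I_{A^*}(\z) - \I_A(\z)\big)\big(\mu(\z) - t\, \nu(\z)\big) \geq 0 \quad \text{for every } \z,$$
as can be checked pointwise in two cases: on $A^*$ the first factor is nonnegative and by definition of $A^*$ the second factor is nonnegative, while off $A^*$ both factors are nonpositive (adopting the convention that $\mu/\nu = +\infty$ wherever $\nu = 0$, so that such points lie in $A^*$). Integrating this inequality and splitting the two factors yields
$$\int_{A^*} \mu\, d\z - \int_A \mu\, d\z \;\geq\; t\left(\int_{A^*}\nu\, d\z - \int_A \nu\, d\z\right).$$

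Now I would use the hypothesis $\int_{A^*}\mu\, d\z = 1-\alpha$ together with the constraint $\int_A \mu\, d\z \geq 1-\alpha$ to conclude that the left-hand side is nonpositive, and hence
$$t\left(\int_{A^*}\nu\, d\z - \int_A \nu\, d\z\right) \leq 0.$$
For $t > 0$ this immediately gives $\int_{A^*}\nu \leq \int_A \nu$, as required. The case $t = 0$ is handled separately: then $A^* \supseteq \{\nu > 0\}$ up to a null set, which forces $\int_{A^*} \nu = \int \nu \leq \int_A \nu$ trivially.

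The main obstacle is mostly bookkeeping: carefully handling the sets where $\nu$ vanishes (so that $\mu/\nu$ must be interpreted) and the degenerate case $t=0$, together with verifying that the pointwise nonnegativity of $(\I_{A^*}-\I_A)(\mu-t\nu)$ holds without exception. Once those edge cases are dispatched, the ``swap'' inequality above does all of the work, and the remainder is a short rearrangement using the two constraints on $\mu$-mass.
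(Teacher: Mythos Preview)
The paper does not provide its own proof of this lemma; it is stated as the classical Neyman--Pearson lemma and invoked directly in the proof of Theorem~\ref{thm:best_region}. Your argument is exactly the standard swapping proof and is correct for $t>0$, which is the only case relevant to the paper's application (there $\mu=p(\cdot\mid\theta)$ and $\nu=p(\cdot)$ are probability densities, so $\mu(A^*)=1-\alpha<1=\int\mu$ forces $t>0$).

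Your treatment of the degenerate case $t=0$ contains a slip: you write $\int_{A^*}\nu=\int\nu\leq\int_A\nu$, but the inequality runs the wrong way (any $A$ satisfies $\int_A\nu\leq\int\nu$). In fact when $t=0$ the set $A^*$ is the whole space and need not be a minimizer in general: the constraint $\int_A\mu\geq 1-\alpha=\int\mu$ only pins down $A\supseteq\{\mu>0\}$ a.e., so $A=\{\mu>0\}$ can have strictly smaller $\nu$-mass if $\nu$ is positive where $\mu$ vanishes. This is a boundary pathology of the lemma as stated rather than a flaw in your main argument, and it is irrelevant to how the lemma is used downstream.
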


\begin{thm}
\label{thm:best_region}
Let $\mathcal{A}$ denote the space of all measurable sets
$A \subseteq \Theta  \times \mathcal{X}$, and let $A(\x)=\{\theta: (\theta,\x) \in A\}$ be the $\theta$-section of $A$, and let $|A(\X)|=\int_{A(\X)}  d\theta$ be the size of $A(\X)$.  Let $A^*$  be the solution to the following minimization problem:
 $$\min_{A \in \mathcal{A}} \E \left[|A(\X)|\right] \ \text{ subject to } \P_{\X|\theta}(\theta \in A(\X))\geq 1-\alpha, \ \forall \theta \in \Theta,$$
where the expectation is taken with respect to the marginal distribution $p(\x)=\int p(\x|\theta)\pi(\theta)d\theta$.
 Then, if $\widehat p(\x|\theta)=p(\x|\theta)$, we have $A^*(\x)=B_\alpha(\x)$ (Equation \ref{eq:BF_set}).
\end{thm}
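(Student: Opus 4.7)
The plan is to reduce the problem to a pointwise application of the Neyman--Pearson lemma (Lemma~\ref{lemma::NP}) by switching the order of integration and exploiting the fact that the frequentist coverage constraint decouples across $\theta$. Throughout, I will use the identity $\pi(\theta|\x) = p(\x|\theta)\pi(\theta)/p(\x)$, which holds because $\widehat{p}(\x|\theta)=p(\x|\theta)$.

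First, I would rewrite the objective. For any measurable $A \subseteq \Theta \times \mathcal{X}$, Fubini's theorem gives
\begin{align*}
\E[|A(\X)|] &= \int_{\mathcal{X}} \int_{\Theta} \I\{(\theta,\x) \in A\}\, d\theta\, p(\x) d\x \\
&= \int_{\Theta} \left(\int_{\mathcal{X}} \I\{\x \in A_\theta\}\, p(\x)\, d\x\right) d\theta \\
&= \int_{\Theta} p(A_\theta)\, d\theta,
\end{align*}
where $A_\theta = \{\x : (\theta,\x) \in A\}$ is the $\x$-section and $p(A_\theta)=\int_{A_\theta} p(\x)\,d\x$. Meanwhile, the coverage constraint $\P_{\X|\theta}(\theta \in A(\X))\geq 1-\alpha$ is precisely $\int_{A_\theta} p(\x|\theta)\,d\x \geq 1-\alpha$. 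Crucially, both the integrand of the objective and the coverage constraint involve only $A_\theta$ for the given $\theta$, with no cross-$\theta$ coupling. Therefore, the global minimization decomposes into independent subproblems indexed by $\theta$:
\begin{equation*}
\min_{A_\theta \subseteq \mathcal{X}} \int_{A_\theta} p(\x)\, d\x \quad \text{subject to} \quad \int_{A_\theta} p(\x|\theta)\, d\x \geq 1-\alpha.
\end{equation*}

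Next, I would apply Lemma~\ref{lemma::NP} pointwise in $\theta$ with $\mu(\x)=p(\x|\theta)$ and $\nu(\x)=p(\x)$. The optimal set takes the form $A_\theta^* = \{\x : p(\x|\theta)/p(\x) \geq t_\theta\}$ for a threshold $t_\theta$ chosen so that $\int_{A_\theta^*} p(\x|\theta)\,d\x = 1-\alpha$ (existence is the hypothesis of the lemma, and corresponds here to the absolute continuity of $\lambda(\X;\theta)$ under $\X\sim p(\cdot|\theta)$). I then need to identify this Neyman--Pearson set with the $\x$-section of $B_\alpha$. Since
\begin{equation*}
\frac{p(\x|\theta)}{p(\x)} \geq t_\theta \iff \pi(\theta|\x) \geq t_\theta\, \pi(\theta) \iff \lambda(\x;\theta) \geq c_\theta
\end{equation*}
for $c_\theta := t_\theta\, \pi(\theta)$, the Neyman--Pearson set is a level set of the posterior $\lambda(\x;\theta)=\pi(\theta|\x)$ at a $\theta$-dependent cutoff. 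By Theorem~\ref{th:pvalue_uniqueness}, the unique $\theta$-indexed monotonic transformation of $\lambda$ whose level sets at $\alpha$ achieve exact $(1-\alpha)$-coverage of the $\X|\theta$ distribution is $h(\x;\theta)=F_\lambda(\lambda(\x;\theta);\theta)$. Hence $c_\theta = F_\lambda^{-1}(\alpha;\theta)$ and the Neyman--Pearson set coincides with the $\x$-section of $B_\alpha$, namely $B_{\alpha,\theta}=\{\x : h(\x;\theta)>\alpha\}$. Glueing the sections back together yields $A^*=B_\alpha$.

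The step I expect to be the main technical nuisance, rather than the conceptual obstacle, is verifying that the threshold $t_\theta$ produced by Neyman--Pearson actually realizes the exact coverage $1-\alpha$ simultaneously with the level-set representation of $B_\alpha$; this requires an absolute-continuity assumption on the distribution of $\lambda(\X;\theta)$ under $\X\sim p(\cdot|\theta)$ so that the cumulative distribution function $F_\lambda(\cdot;\theta)$ has no flat pieces at level $\alpha$. In the discrete or atom-containing case one would need a randomized version of the confidence procedure or settle for $\int_{A_\theta}p(\x|\theta)\,d\x \geq 1-\alpha$ with possible strict inequality, which complicates the uniqueness statement but not the optimality. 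A secondary point is measurability: the pointwise-in-$\theta$ construction must yield a jointly measurable set $A^*$, which is automatic here because $B_\alpha = \{(\theta,\x) : h(\x;\theta) > \alpha\}$ is defined via the measurable function $h$.
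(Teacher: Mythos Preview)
Your proposal is correct and follows essentially the same route as the paper: swap the order of integration via Fubini to write the expected size as $\int_\Theta p(A_\theta)\,d\theta$, observe that the coverage constraint decouples across $\theta$, and apply the Neyman--Pearson lemma pointwise to obtain $A_\theta^*=\{\x:p(\x|\theta)/p(\x)\geq t_\theta\}$, then rewrite this as a posterior level set using $p(\x|\theta)/p(\x)=\pi(\theta|\x)/\pi(\theta)$. Your additional invocation of Theorem~\ref{th:pvalue_uniqueness} to pin down $c_\theta=F_\lambda^{-1}(\alpha;\theta)$ and your remarks on absolute continuity and joint measurability go slightly beyond what the paper spells out, but they are consistent refinements rather than a different approach.
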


\begin{proof}\  
Let $A_\theta=\{\x: (\theta,\x) \in A\}$ be the $\x$-section of $A$.
Notice that the optimization problem is equivalent to  
$$\min_{A \in \mathcal{A}} \int \left[\int_{A(\x)} 1 d\theta  \right] p(\x)d\x \ \text{ subject to } \int_{A_\theta} p(\x|\theta)d\x \geq 1-\alpha \ \forall \theta \in \Theta,$$
which is further equivalent to
$$\min_{A \in \mathcal{A}} \int \left[\int_{A_\theta}  p(\x)d\x \right]  d\theta \  \text{ subject to } \int_{A_\theta} p(\x|\theta)d\x \geq 1-\alpha \ \forall \theta \in \Theta,$$
which is equivalent to a point-wise optimization problem for any given $\theta$:
$$\min_{A_\theta}  \int_{A_\theta}  p(\x)d\x    \ \text{ subject to } \int_{A_\theta} p(\x|\theta)d\x \geq 1-\alpha.$$
Lemma \ref{lemma::NP} implies that the optimal solution is
$$A^*_\theta=\{x:p(\x|\theta)/p(\x) \geq t_\theta\},$$
where $t_\theta$ satisfies $\P_{\X|\theta}(\theta \in A^*(\X))=1-\alpha$. The optimal set is then (using the fact that if $\widehat p(\x|\theta)=p(\x|\theta)$, then  $p(\x|\theta)/p(\x) =\pi(\theta|\x)/\pi(\theta)$)
$$A^*=\{(\theta,\x):\pi(\theta|\x)/\pi(\theta) \geq t_\theta\},$$
or, equivalently,
$$A^*=\{(\theta,\x):\pi(\theta|\x)\geq t'_\theta\},$$
where $t'_\theta=t_\theta \pi(\theta)$.
\end{proof}

\subsection{Local diagnostics to check coverage across the parameter space}
\label{sec:diagnostics}


\begin{algorithm}[!h]
    \small \caption{Estimate empirical coverage $ \pr_{\X|\theta}(\theta \in \widehat B_\alpha(\X))$, for all $\theta \in \Theta$.}
    \label{alg:estimate_coverage}
    \begin{flushleft}
        {\bf Input:} simulator $F_{\theta}$; number of simulations $B''$; $\pi_{\Theta}$ (fixed proposal distribution over parameter space); test statistic $\lambda$; level $\alpha$; critical values $\widehat{C}_\theta$; probabilistic classifier\\
        {\bf Output:} estimated coverage $ \widehat \pr_{\X|\theta}(\theta \in \widehat B_\alpha(\X))$ for all $\theta  \in \Theta$
    \end{flushleft}
    \begin{algorithmic}[1]
    \State Set $\T_{\rm diagn} \gets \emptyset$ 
    \For{i in $\{1,\ldots,B^{''}\}$}
    \State  Draw parameter $\theta_i \sim  \ r(\theta)$
    \State Draw sample 
    $\X_i \stackrel{iid}{\sim} p(\x|\theta_i)$
    \State Compute  test statistic $\lambda_i \gets \lambda(\X_i;\theta_i)$
    \State Compute indicator  variable $W_i \gets \I\left(\lambda_i \geq \widehat{C}_{\theta_i}\right)$  \State $\T_{\rm diagn} \gets \T_{\rm diagn} \cup \{(\theta_i, W_i)\}$
    \EndFor
    \State  Use $\T_{\rm diagn}$ to learn $ \widehat \pr_{\X|\theta}(\theta \in \widehat B_\alpha(\X))$ across $\Theta$ by regressing  $W$ on $\theta$ with a probabilistic classifier
    \State \textbf{return}  
    $ \widehat \pr_{\X|\theta}(\theta \in \widehat B_\alpha(\X))$
    \end{algorithmic}
\end{algorithm}

\clearpage

\section{2D example with a well-specified forward model}\label{sec:well_specified_2D}


\noindent In this example, we follow up on the result presented in Section~\ref{sec:2D_example} seen in Fig.~\ref{fig:2D_example}. As was the case there, the true likelihood is given by a mixture of two normal distributions,
\begin{equation*}
    p(X\vert\theta) = \frac{1}{2}\mathcal{N}(\theta, I) + \frac{1}{2}\mathcal{N}(\theta, \sigma^2 I),
\end{equation*}
where $\sigma=0.1$, and the common mean $\theta$ is the parameter of interest. However, this time we assume that the posterior was learned using train data from the correctly specified joint distribution on $\theta$ and $X$ alike,
 $$ \mathcal{T}_{\rm train} = \{ (\theta_1, X_1) \, \ldots  (\theta_B, X_B)\} \sim \pi(\theta) p(X | \theta),$$
with the same localized prior $\pi(\theta) = \mathcal{N}(0, 2I)$ and a well-specified forward model (i.e.~$\delta=0$, by Equation~\ref{eq:misspecified_forward_model}). Figure~\ref{fig:2D_example_well_specified} Panel a shows that HPD sets from a flow matching estimator trained with $B=50{,}000$ still fail to provide instance-wise coverage of the true parameter on average except near the mode of the prior, drastically noted at $\theta^*=(8.5, 8.5)$ in Panel a-\textit{left}. With the same calibration data distribution as in Section~\ref{sec:2D_example} ($B'=30,000$) and using the same monotone neural network architecture to learn the p-value function, the FreB procedure accomplishes the same outcome of providing valid confidence sets regardless of the value of the true parameter. Panel b shows the reshaped confidence sets resulting from FreB, and Panel b-\textit{right} indicates that the $95\%$ coverage probability is still maintained everywhere.\\

\begin{figure}[hb!]
    \centering
    \includegraphics[width=1.0\columnwidth,trim=0 0 1.4in 0,clip]{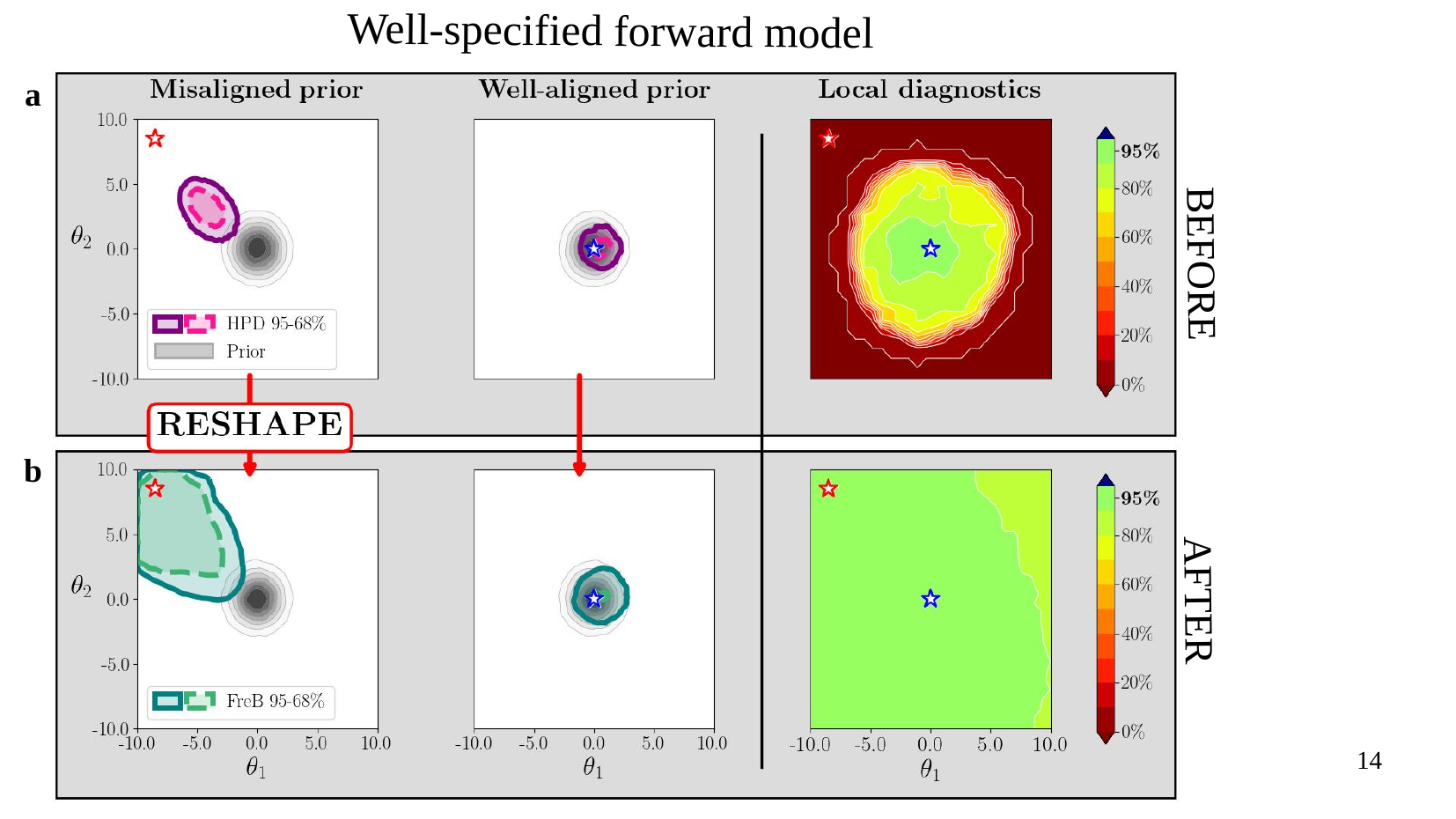}
    \caption{\small \textbf{2D synthetic Gaussian mixture model example with a well-specified forward model.} The task is to infer the common mean $\theta$ of a mixture of two Gaussians with different covariances using a flow matching generative model trained with a localized prior centered at the origin. \textbf{Panel a:} 95\% and 68\% HPD sets for two scenarios where the prior is misaligned ({\em left}) versus well-aligned {(\em center}) with the true $\theta^*$  (\textcolor{red}{red} star). {\em Right,} Local diagnostics of 95\% HPD sets shows that the actual coverage of these sets can be very far from the nominal 95\% level, when the truth is further away from the center where the train data are concentrated.
   \textbf{Panel b:} After reshaping and slicing the posteriors as in Figure~\ref{fig:1D_example}b, we obtain the corresponding FreB sets. 
   For all instances of $\theta$ and for all levels of $\alpha$, domain scientists are guaranteed to achieve the desired coverage level, here illustrated for the 95\% case in the {\em right} plot. That is, FreB sets are robust against misaligned training priors. The size of FreB sets is also smaller for well-aligned priors (compare {\em center bottom} plot with the {\em left bottom} plot).}
    \label{fig:2D_example_well_specified} 
\end{figure}

\clearpage

\printbibliography[category=supplement,title={Appendix references}]

\end{document}